\documentclass[runningheads]{llncs}

\usepackage{eccv}

\usepackage{eccvabbrv}

\usepackage{graphicx}
\usepackage{booktabs}
\usepackage{siunitx}
\usepackage{wrapfig}
\usepackage{caption}
\usepackage{subcaption}
\usepackage{graphicx}
\usepackage{bm}
\usepackage{multirow}
\newcommand{\bs}[1]{\boldsymbol{#1}}
\usepackage{algorithm}
\usepackage{algpseudocode}
\usepackage{tikz}

\usepackage{amsmath}
\usepackage{makecell}

\usepackage{enumitem}

\RequirePackage{xspace}
\makeatletter
\DeclareRobustCommand\onedot{\futurelet\@let@token\@onedot}
\def\@onedot{\ifx\@let@token.\else.\null\fi\xspace}
\def\eg{\emph{e.g}\onedot}
\def\ie{\emph{i.e}\onedot} 
\usepackage{needspace}
\usepackage[misc]{ifsym}

\newcommand\blfootnote[1]{%
  \begingroup
  \renewcommand\thefootnote{}\footnote{#1}%
  \addtocounter{footnote}{-1}%
  \endgroup
}

\usepackage[accsupp]{axessibility}  

\usepackage{hyperref}

\usepackage{orcidlink}

\begin{document}

\title{Distill on a Diet: Efficient Knowledge Distillation via Learnable Data Pruning} 

\titlerunning{Distill on a Diet}

\author{
Yifan Wu\inst{1,2}\textsuperscript{*} \and
Yiqi Wang\inst{4}\textsuperscript{*} \and
Xichen Ye\inst{3}\textsuperscript{*} \and
Wenjing Yan\inst{2} \and
Xiaoqiang Li\inst{4}\textsuperscript{(\Letter)} \and
Cheng Jin\inst{3} \and
Xiangyu Yue\inst{2} \and
Weizhong Zhang\inst{1}\textsuperscript{(\Letter)}
}

\authorrunning{Y.~Wu et al.}

\institute{
School of Data Science, Fudan University \and
Faculty of Engineering, The Chinese University of Hong Kong \and
College of Computer Science and Artificial Intelligence, Fudan University \and
School of Computer Engineering and
Science, Shanghai University\\
\email{victorwu@link.cuhk.edu.hk}\blfootnote{$*$ Equal contribution. \Letter~Corresponding authors.}
\email{\{wangyq2004,xqli\}@shu.edu.cn}
\email{xcye25@m.fudan.edu.cn}
\email{wenjingyan@cuhk.edu.hk}
\email{xyyue@ie.cuhk.edu.hk}
\email{\{jc,weizhongzhang\}@fudan.edu.cn}
\url{https://github.com/yifanwu-victor/Distill-on-a-Diet}\\
}

\maketitle

\begin{abstract}
Knowledge Distillation (KD) is widely used to obtain compact models for efficient inference in resource-constrained environments.
Yet the computational overhead of the distillation process itself is often overlooked, raising the question of whether a better student model can be obtained with less data and less compute via data pruning.
However, existing data pruning methods are not designed for KD: some introduce substantial overhead (\eg, obtaining training dynamics through retraining), while others rely on heuristic selection rules that fail to capture what KD actually requires, often resulting in suboptimal subsets.
To address these issues, we propose \textbf{IF-Beta}, an efficient data pruning framework that combines 
influence function and a learnable sampling policy.
Empirically, we first demonstrate that influence functions can serve as an effective and efficient estimator of sample impact in KD settings, where only a pretrained teacher is available.
Building on this, our sampling policy is specifically parameterized by a Beta distribution, whose highly flexible two-parameter family allows the policy to adapt to diverse pruning regimes rather than being tied to fixed heuristic forms.
Next, we formulate KD pruning as optimizing this policy through a bilevel objective, where the inner loop operates in the teacher’s feature space with a KD-aligned objective, enabling fast proxy training, while the outer loop updates the policy parameters to maximize the distillation performance.
This design ensures IF-Beta is both computationally efficient and inherently aligned with the goals of KD.
Extensive experiments on CIFAR-10/100 and ImageNet show that IF-Beta consistently outperforms other baselines across a wide range of pruning ratios.
Remarkably, IF-Beta does enable students trained on less data and less compute to surpass the performance of students distilled on the full dataset.
  \keywords{Data Pruning \and Knowledge Distillation \and Influence Function \and Bilevel Optimization}
\end{abstract}

\section{Introduction}
\label{sec:intro}

Over the past decade, deep neural networks (DNNs)~\cite{DBLP:conf/nips/DCNN,DBLP:conf/cvpr/Resnet,DBLP:conf/iclr/ViT} have achieved remarkable success across a wide range of domains, largely driven by the increasing scale of models and datasets. 
Yet such large-scale models inevitably incur substantial computation cost at inference, which severely limits their applicability in real-world scenarios. 
To address this issue, \cite{DBLP:journals/corr15/KnowledgeDistilling} introduced Knowledge Distillation (KD), enabling compact students to inherit the knowledge of cumbersome teachers and thereby achieve efficient inference with competitive performance~\cite{DBLP:conf/emnlp/TinyBert,gou2021kdsurvey,zhao2022DKD}.
Despite its inference benefits, KD does not alleviate the training burden: students are usually distilled on the entire dataset under the guidance of a large teacher, resulting in significant training cost~\cite{iclr25medium}. 


Several prior methods have been explored to accelerate KD training~\cite{DBLP:conf/eccv/LiangLBZTLF22, DBLP:conf/iccv/YangZLZY023, DBLP:conf/cvpr/BeyerZRMA022, DBLP:conf/eccv/ShenX22}, yet their speedups typically come at the cost of degrading student performance~\cite{iclr25medium}.
Alternatively, an emerging line of research~\cite{baruch2025KDinPruning, iclr25medium} points to another promising direction: reducing the number of samples that participate in distillation, \ie, data pruning~\cite{phillips2017coresets, 2025coresetSurvey}, which aims to retain only a subset of training samples while maintaining competitive performance of the models.
Baruch et al.~\cite{baruch2025KDinPruning} was the first to systematically re-evaluate classical pruning methods~\cite{el2n, toneva2018forgetting, pleiss2020aum} in KD, demonstrating that data pruning can indeed be effective for KD.
More recently, Chen et al.~\cite{iclr25medium} introduced a pruning method specifically tailored for KD, which directly employs the teacher’s cross-entropy (CE) loss as a difficulty metric and selects samples whose losses fall within a ``medium'' range window.

Despite these advances, data pruning in the context of KD remains hindered by two fundamental, yet orthogonal, obstacles:
(i) \textbf{Efficiency-Effectiveness Trade-off in Score Estimation}.
High-fidelity trajectory-based metrics (\eg, EL2N~\cite{el2n}, Forgetting~\cite{toneva2018forgetting}, and AUM~\cite{pleiss2020aum}) are computationally prohibitive for KD, as they depend on training-time statistics that are generally not provided with pretrained teachers.
Consequently, obtaining these scores necessitates expensive retraining, which largely offsets the efficiency gains intended by pruning.
In contrast, the CE loss adopted by Chen et al.~\cite{iclr25medium} does greatly reduce time cost by computing only the per-sample loss.
However, it is an unreliable indicator of sample difficulty, as over-parameterized deep networks are capable of overfitting even hard or mislabeled samples~\cite{DBLP:conf/icml/ArpitJBKBKMFCBL17}, resulting in small yet weakly discriminative losses across samples of widely varying difficulty (more details see \cref{fig: if_sc}).
(ii) \textbf{Rigid and Suboptimal Sampling}.
Even with reliable scores, conventional sampling strategies rely heavily on manual heuristics, such as top-$k$~\cite{baruch2025KDinPruning}, stratified sampling with a fixed cut-off ratio~\cite{DBLP:conf/iclr/ccs}, or window-based selection over the score distribution~\cite{bws, iclr25medium}.
These hand-crafted designs share a common limitation: they impose a fixed selection region that cannot adapt to the varying data distributions and teacher-student dynamics across different KD settings, inevitably leading to limited performance.

To overcome the aforementioned limitations, we propose an optimized KD pruning framework, \textbf{Influence Function based Beta Policy (IF-Beta)}, which unifies IF scoring with a learnable probabilistic pruning policy.
In particular, we first revisit the influence function (IF)~\cite{koh2017understanding}, which is a post-hoc approach to quantify the impact of each training data.
Thanks to the recent advances~\cite{ye2025robust}, we show that IF serves as an efficient and effective score estimator for KD-oriented data pruning, replacing prior score-based approaches.
Building upon this IF-based scoring, we further introduce a learnable probabilistic pruning policy based on a parameterized Beta distribution, called Beta Policy, which provides a shape-adaptive mechanism for selecting data.
Formally, we pose the optimization of our Beta Policy as the following bilevel optimization problem:
\begin{gather}
    \min_{\phi} \Phi(\phi) = \mathbb{E}_{\bm m \sim\pi_{\phi}^r} \mathcal{\widehat{L}} (\theta_S^*(\bm m)), \\
    s.t. \;
    \theta^*_S(\bm m) = \underset{\theta_S}{\arg \min}\, \mathcal{L} (\theta_S; \theta_T,  \bm m),
\end{gather}
where $\phi$ parameterizes the Beta Policy $\pi_\phi^r$ over the scoring space.
The inner-level objective trains the student $\theta_S$ on the subset of samples specified by the mask $\bm m \sim \pi_\phi^r$, using the teacher $\theta_T$ for guidance through the KD loss $\mathcal{L}(\theta_S; \theta_T, \bm m)$.
The outer-level objective then updates $\phi$, evaluated by the validation objective $\widehat{\mathcal{L}}(\theta_S^*)$, to improve the expected distillation performance.
To further realize this efficiently, we simulate the student side of KD using a lightweight linear classifier trained on the frozen teacher features.  
This proxy-student formulation faithfully captures the knowledge transfer dynamics of KD while keeping the overall computational cost negligible.
Overall, we summarize our contributions as follows:
\begin{itemize}
    \item We investigate the influence function as an efficient and effective estimator for KD pruning, and empirically show that IF can substitute both difficulty-based and uncertainty-based scores utilized in conventional data pruning without requiring retraining.
    \item We introduce a learnable data pruning policy based on a parameterized Beta distribution, which can be optimized efficiently via a bilevel formulation in the teacher’s feature space.
    \item Extensive experiments demonstrate that IF-Beta achieves consistently strong performance across KD settings, and show that IF-Beta can surpass full-data distillation using less data and less compute.
\end{itemize}

\section{Preliminaries}
\label{sec:pre}

We first revisit the formulation of knowledge distillation (KD) and influence functions (IF), which together establish the basis in our framework.
A more detailed discussion, along with other related works (including knowledge distillation, data pruning and influence function), is provided in the Appendix~\ref{app: related works}.

\vspace{0.05in}
\noindent\textbf{Knowledge Distillation.}
The goal of KD is to transfer the knowledge of a large teacher network into a compact student model, enabling efficient inference while maintaining competitive performance~\cite{DBLP:journals/corr15/KnowledgeDistilling}.
In this work, we typically focus on the standard KD framework for image classification tasks, which simply leverages the teacher's predictive distribution to guide the optimization of the student. 

Let the training dataset be $D_{\text{tr}} = \{ z_n = (x_n, y_n) \}_{n=1}^N$. 
Given a teacher network $f_T$ and a student network $f_S$, the student is trained on $D_{\text{tr}}$ by minimizing a weighted combination of the standard cross-entropy loss and a distillation term:
\begin{equation}
\ell_\text{KD} = (1-\alpha) \, \ell_{\text{CE}} + \alpha \, \ell_{\text{KL}},
\label{eq:kd_total}
\end{equation}
where $\alpha \in [0,1]$ and the distillation term is the Kullback--Leibler (KL) divergence between the teacher and student predictive distributions:
\(\ell_{\text{KL}} = \text{KL}(f_T(x)\,\|\,f_S(x)).\)

\vspace{0.05in}
\noindent\textbf{Data Pruning.}
Data pruning aims to select a subset of training samples that preserves the performance of the full dataset while reducing computational cost.
Most existing pruning methods~\cite{baruch2025KDinPruning, iclr25medium, DBLP:conf/iclr/ccs, bws} follow a two-stage pipeline: (i) estimating a score for each sample, and (ii) selecting a subset based on these scores under a given pruning ratio.

For score estimation, prior work typically measures either sample difficulty (\eg, EL2N~\cite{el2n}, Forgetting~\cite{toneva2018forgetting}, AUM~\cite{pleiss2020aum}) or predictive uncertainty (\eg, Dyn-Unc~\cite{DynamicUncertainty}, DUAL~\cite{DUAL}), which are derived from training dynamics (\ie, per-epoch predictions) and therefore require full or partial model retraining.
In KD, however, only a pretrained teacher is available, without its training trajectory.
This motivates the need for effective and efficient post-hoc score estimators that can be computed directly from a pretrained model.

Given the estimated scores, existing methods typically apply heuristic selection rules, such as top-$k$~\cite{baruch2025KDinPruning}, stratified sampling with manually designed cut-off ratio~\cite{DBLP:conf/iclr/ccs}, or sliding-window mechanisms~\cite{bws, iclr25medium}.
Such heuristic strategies may limit subset optimality under different pruning regimes, motivating more principled and adaptive selection mechanisms. 

\vspace{0.05in}
\noindent\textbf{Influence Functions.}
To address the limitation of trajectory-dependent scoring, we revisit influence functions as a potential post-hoc estimator, which aims to understand the influence of individual training points on the model’s predictions, \ie, how the model’s output would change if a particular training point were removed~\cite{hampel1974influence,cook1980characterizations}.

Consider a model, whose parameters $\theta^\star$ are obtained via empirical risk minimization (ERM) over $D_{\text{tr}}$:
\begin{equation}
\theta^\star := \arg\min_\theta \frac{1}{N} \sum_{n=1}^N \ell(z_n, \theta),
\end{equation}
where $\ell$ denotes a per-sample loss function, \eg, the cross-entropy loss for classification tasks.

Next consider a validation set $ D_{\text{val}} = \{ z_m = (x_m, y_m) \}_{m=1}^M $.
Following Koh and Liang~\cite{koh2017understanding}, the influence of a training point $z_{\text{tr}} \in D_{\text{tr}}$ on a specific validation example $z_{\text{val}} \in D_{\text{val}}$ can be expressed as:
\begin{equation}
\mathcal{I}(z_\text{tr}; z_\text{val}) := g_{z_\text{val}}^\top H_{\theta^\star}^{-1} g_{z_\text{tr}},
\end{equation}
where $g_z = \nabla_{\theta^\star} \ell(z, \theta^\star)$ denotes the gradient of the loss 
w.r.t. $\theta^\star$ for $z$, 
and $H_{\text{tr}} = \frac{1}{N} \sum_{n=1}^N \nabla^2_{\theta^\star} \ell(z_n, \theta^\star)$ 
is the Hessian matrix computed over $D_{\text{tr}}$.
Therefore, the influence function of $z_\text{tr}$ on $D_\text{val}$ can be extended as follows:
\begin{equation}
    \label{eq: standard influence function}
    \mathcal{I}(z_\text{tr}; D_\text{val})
    := \frac{1}{M} \sum_{m=1}^M g_{z_m}^\top H_\text{tr}^{-1} g_{z_\text{tr}},
\end{equation}
where $z_m \in D_\text{val}$.

This allows us to quantify the contribution of each training point.
Despite its appealing theoretical foundation, traditional IF estimators were long considered impractical due to their high computational cost~\cite{iclr24datainf} and unreliable estimation quality~\cite{iclr21IFisFragile,nips22IFQA}.

\section{Method}
\label{sec:method}

In this section, we first introduce an efficient post-hoc score estimator based on influence functions (\Cref{sec: IF-FVM}). Building upon the estimated scores, we then propose an adaptive sampling strategy via a Beta Policy (\Cref{sec: Beta Policy}).
The search for the optimal policy is formulated as a bilevel optimization problem (\Cref{sec: bilevel optimization}), for which we further present an efficient solution to this optimization in the feature space under the KD setting (\Cref{sec: efficient bilevel}).

\subsection{IF Serves as an Efficient Score Estimator}
\label{sec: IF-FVM}

To address the limitation of trajectory-dependent scoring in KD, we revisit influence functions as a post-hoc estimator.
Although classical influence estimation suffers from instability and high computational cost in deep neural networks, recent advances~\cite{ye2025robust} show that optimizing towards a flat validation minimum (FVM) significantly stabilizes influence estimation (also observed in \cref{fig: if_sc}).

Following this insight, we fine-tune the pretrained teacher model $\theta_T$ on the validation set $\mathcal{D}_\text{val}$ using Sharpness-Aware Minimization (SAM)~\cite{iclr21sam}.
Typically, we solve:
\begin{gather}
    \tilde{\theta}_T
    := \arg \min_{\theta_T} \hat{R}^\gamma_\text{val} (\theta_T),
    \label{eq: SAM} \\
    \text{where}\;
    \hat{R}^\gamma_\text{val} (\theta_T)
    := \max_{\|\Delta\| \le \gamma} 
    \hat{R}_{\text{val}}(\theta_T + \Delta).
\end{gather}
Here, $\gamma$ is a fixed constant and $\hat{R}_\text{val}$ denotes the empirical validation risk.
This fine-tuning step is highly efficient because it operates directly on the given teacher model, introducing only modest computational overhead.
The SAM-style objective improves the stability of curvature estimation by enforcing local flatness of the loss landscape, thereby enabling IF to achieve more reliable influence estimation than previous methods.

Let $\tilde{g}_{z} = \nabla_{\tilde{\theta}_T} \ell(z, \tilde{\theta}_T)$ denote the gradient of the loss w.r.t $\tilde{\theta}_T$ for the sample $z$, and let $\tilde{H}_{\text{val}} = \frac{1}{M} \sum_{m=1}^M \nabla^2_{\tilde{\theta}_T} \ell (z_m, \tilde{\theta}_T)$ represent the Hessian matrix computed over the validation set $D_\text{val}$.
Under the FVM condition, the influence of a training sample $z_\text{tr}$ on the validation set $D_\text{val}$ can then be derived as:
\begin{equation}
    \mathcal{I}(z_\text{tr}, D_\text{val}) := \tilde{g}_{z_\text{tr}}^\top \tilde{H}^{-1}_\text{val} \tilde{g}_{z_\text{tr}},
    \label{eq: IF-FVM}
\end{equation}
which we adopt as the score estimator for sample selection.
A detailed derivation is provided in Appendix~\ref{app:derivation}.

In practice, directly computing Eq.~\ref{eq: IF-FVM} can be inefficient, as it requires estimating the inverse Hessian matrix $\tilde{H}^{-1}_\text{val}$.
To address this issue,
we approximate the inverse Hessian using the inverse of a diagonal Fisher Information matrix, $\mathrm{diag}\left( \frac{1}{|D_\text{tr}|} \sum_{z_\text{tr} \in D_\text{tr}}\tilde{g}_{z_\text{tr}} \tilde{g}_{z_\text{tr}}^\top \right)$,
which yields a highly efficient approximation and makes the overall score computation practical at scale.


\begin{wrapfigure}{r}{0.55\linewidth}
  \vspace{-0.3in}
  \centering
    \raisebox{0mm}{\includegraphics[width=0.057\linewidth]{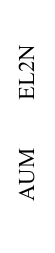}}%
    \subcaptionbox{CIFAR-10\label{fig:cifar10_if}}{\includegraphics[width=0.4\linewidth]{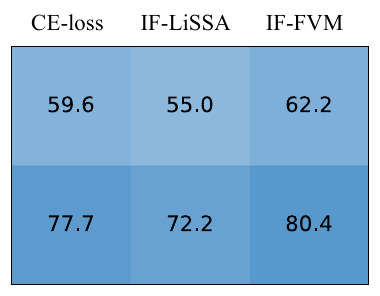}}%
    \hspace{1pt}
    \subcaptionbox{CIFAR-100\label{fig:cifar100_if}}{\includegraphics[width=0.4\linewidth]{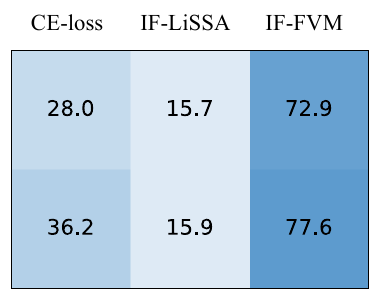}}%
    \hspace{0pt}%
    \raisebox{0.4mm}{\includegraphics[width=0.055\linewidth]{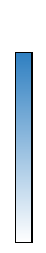}}%
    \vspace{-0.05in}
  \caption{
    Spearman rank correlation (\%) between post-hoc score estimators and trajectory-based difficulty metrics.
    Higher values indicate stronger alignment in sample ranking.
  }\label{fig: if_sc}
  \vspace{-0.25in}
\end{wrapfigure}
\cref{fig: if_sc} validates the reliability of our proposed IF-FVM score estimator.
Specifically, we benchmark various post-hoc score estimators by computing the Spearman rank correlation between their scores and two established trajectory-based difficulty metrics: EL2N~\cite{el2n} and AUM~\cite{pleiss2020aum}.
These metrics are widely regarded as faithful proxies for sample hardness.
Intuitively, a higher correlation coefficient indicates that the estimator more accurately captures the intrinsic difficulty of samples by effectively aligning with the learning dynamics observed during training.
As shown, IF-FVM consistently maintains a strong correlation across datasets, outperforming both the CE-loss baseline~\cite{iclr25medium} and IF-LiSSA~\cite{jmlr17lissa}.
Our results suggest that IF-FVM serves as a computationally efficient yet robust alternative to metrics that otherwise require expensive trajectory tracking.

\subsection{Adaptive Sampling via Beta Policy}
\label{sec: Beta Policy}




  


Having established that IF under FVM serves as an effective score estimator, we next pose a natural question:
\textit{Can existing sampling strategies for sample selection be directly applied to KD scenarios without compromising performance or efficiency?}
While applicable in principle, their effectiveness in KD remains suboptimal.
Specifically, we identify critical limitations in representative methods, CCS~\cite{DBLP:conf/iclr/ccs} and BWS~\cite{bws}, within KD contexts:
CCS exhibits a threshold shift, where the optimal cutoff ratio varies significantly between KD and non-KD settings, rendering pre-defined hyperparameters non-transferable (\cref{fig:ccs_wo_kd,fig:ccs_w_kd}).
Meanwhile, BWS is hindered by a suboptimal fixed-window design; as our replacement experiments (\cref{fig:window_is_not_good}) reveal, introducing randomness into the selection window actually improves performance.
These findings (further detailed in Appendix~\ref{app:analysis_sampling}) underscore the need for an adaptive, learnable sampling strategy that overcomes the rigidity of current heuristic-based selection.

\begin{figure}[t]
\centering
\vspace{-0.1in}

\begin{subfigure}[t]{0.27\linewidth}
  \centering
  \includegraphics[height=2.6cm]{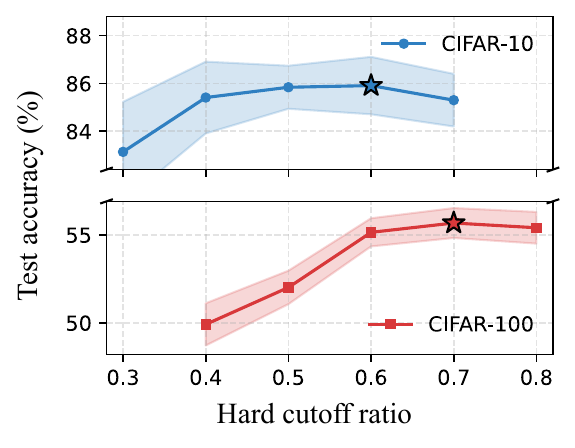}
  \caption{CCS w/o KD.}
  \label{fig:ccs_wo_kd}
\end{subfigure}
\hfill
\begin{subfigure}[t]{0.27\linewidth}
  \centering
  \includegraphics[height=2.6cm]{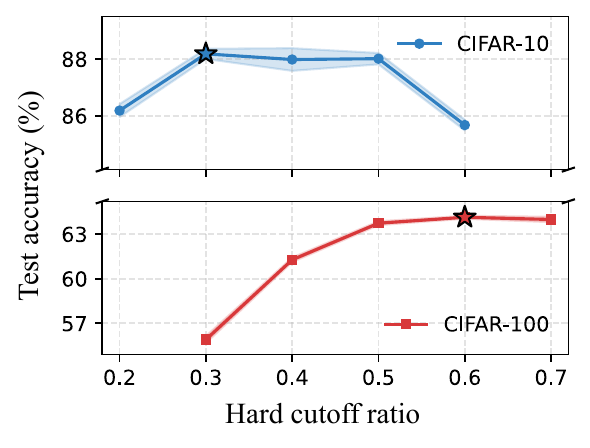}
  \caption{CCS w/ KD.}
  \label{fig:ccs_w_kd}
\end{subfigure}
\hfill
\begin{subfigure}[t]{0.42\linewidth}
  \centering
  \includegraphics[height=2.6cm]{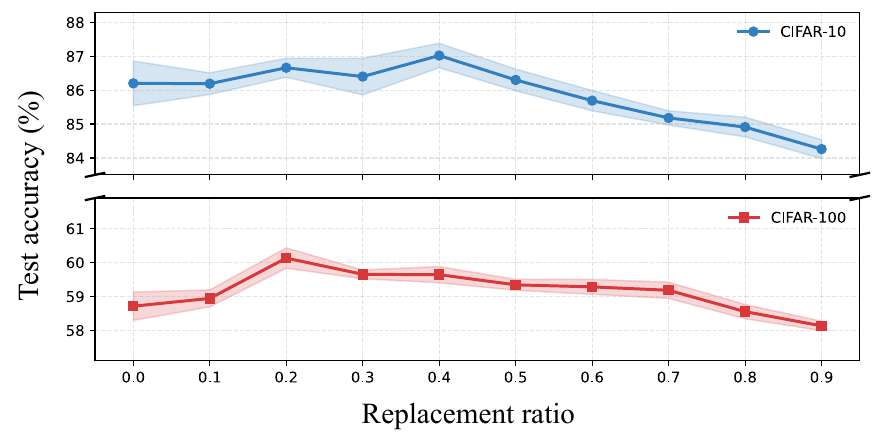}
  \caption{Best-window w/ replacement.}
  \label{fig:window_is_not_good}
\end{subfigure}

\vspace{-0.05in}
  \caption{
    Limitations of heuristic sampling methods on ResNet-18 with a pruning ratio 90\%.
    (a,b) CCS with different hard cutoff ratios when w/o KD and W/ KD.
    (c) BWS with different replacement ratios with outside samples.
  }
\label{fig:ccs_bws_limitaion}
\vspace{-0.15in}
\end{figure}

\begin{wrapfigure}{r}{0.5\linewidth}
    \vspace{-0in}
    \centering
    \includegraphics[width=\linewidth]{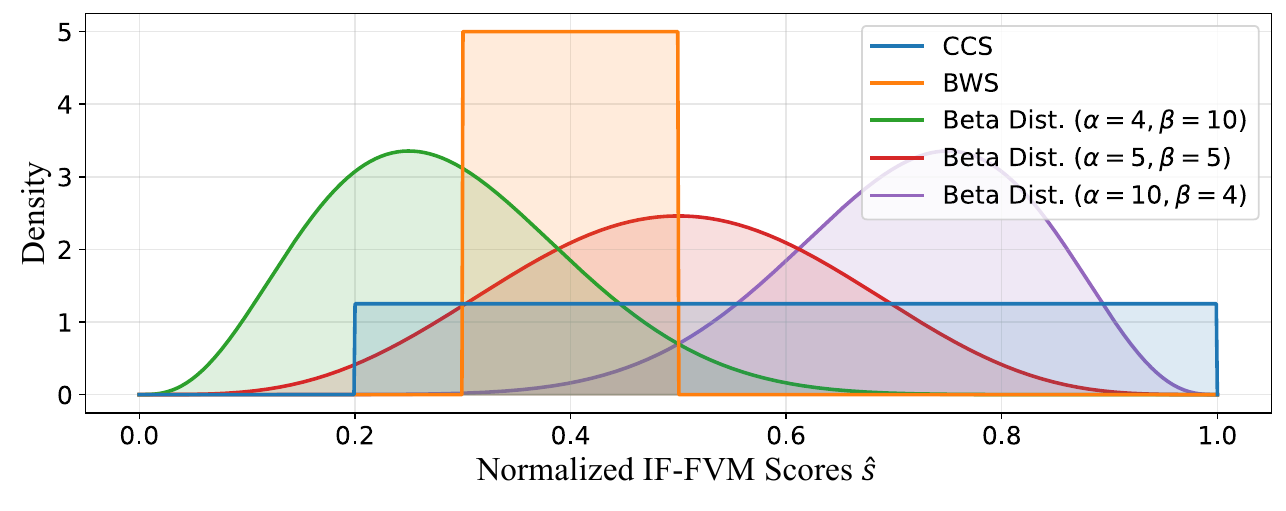}
    \vspace{-0.3in}
    \caption{
        Comparison of our Beta Policy with CCS and BWS.
        Our Beta Policy provides a more flexible and adaptive sampling distribution.
    }
    \label{fig: beta_dist}
    \vspace{-0.25in}
\end{wrapfigure}
To address the aforementioned limitations, we propose a Beta-based sampling policy.
Given a precomputed difficulty score $s_i$ for each sample $z_i$, we apply rank-to-percentile normalization to map the scores onto the interval $[0,1]$, obtaining normalized values $\hat{s}_i \in [0, 1]$, where smaller values correspond to more difficult samples.
Accordingly, we parameterize the selection probability of each sample using the Beta distribution:
\begin{equation}
    p_i(\phi) = \frac{1}{Z} \frac{1}{B(\alpha(\phi), \beta(\phi))}
    \,\hat{s}_i^{\alpha(\phi)-1}(1-\hat{s}_i)^{\beta(\phi)-1},
    \label{eq:beta_pdf}
\end{equation}
where $B(\cdot, \cdot)$ is the Beta function, and $Z$ is a normalization constant ensuring $\sum_i p_i=1$.
Here, $\phi$ represents the learnable parameters that jointly control $\alpha$ and $\beta$.
As illustrated in \cref{fig: beta_dist}, our approach provides a continuous and flexible density over the sample space.
This contrasts with the rigid, uniform sampling over predefined support sets used in CCS and BWS, allowing for a more adaptive selection strategy.

Based on the parameterized sampling probability $p_i(\phi)$ defined in Eq.~\ref{eq:beta_pdf}, we introduce the \textbf{Beta Policy} $\pi_{\phi}^r$, a Beta distribution-based categorical distribution over the training samples conditioned on $\phi$ and $r$.
This policy enables sampling a binary mask vector $\bm m \in \{0, 1\}^{N}$ with a fixed pruning ratio $r \in [0, 1]$ according to
\begin{equation}
    \bm m \sim \pi_\phi^r := p(\bm m \mid \phi, r),
\end{equation}
where
\begin{equation}
    p(\bm m \mid \phi, r) = \frac{1}{Z_{K_r}} \prod_{i=1}^{N} p_i(\phi)^{m_i} 1_{\|\bm m\|_1 = K_r}(\bm m),
\end{equation}
$1_{{\|\bm m\|_1 = K_r}}(\cdot)$ denotes the indicator function and $Z_{K_r}$ is the normalization constant ensuring a valid categorical distribution over all masks satisfying $\|\bm m\|_0 = K_r = (1-r)N$.
The sampled mask vector $\bm m$ then defines a pruned subset of the training data, $\mathcal{D}_\text{sub} = \{z_i \mid m_i = 1, z_i \in \mathcal{D}_\text{tr}\}$.

\subsection{Bilevel Optimization for the Beta Policy}
\label{sec: bilevel optimization}

Following Zhou et al.~\cite{icml22BilevelCoreset}, we formulate the search for the optimal policy $\pi_{\phi}^r$ as a bilevel optimization problem:
\begin{gather}
    \min_{\phi} \Phi(\phi) = \mathbb{E}_{\bm m \sim\pi_{\phi}^r} \mathcal{\widehat{L}} (\theta_S^*(\bm m)), \label{obj: outer} \\
    s.t. \;
    \theta^*_S(\bm m) = \underset{\theta_S}{\arg \min}\, \mathcal{L} (\theta_S; \theta_T,  \bm m). \label{obj: inner}
\end{gather}
Here, $\theta_S$ denotes the model parameter of the student model.
The inner objective evaluates the training risk of $\theta$ on the sampled subset $\mathcal{D}_\text{sub}$ using the Knowledge Distillation loss $\ell_\text{KD}$:
\begin{equation}
    \mathcal{L}(\theta_S; \theta_T, \bs{m}) = \frac{1}{K} \sum_{z_i \in D_{\text{sub}}} \ell_\text{KD}(z_i, \theta_S, \theta_T),
    \label{eq: inner obj}
\end{equation}
where $\theta_T$ represents the parameters of the teacher model.
The outer objective measures the validation risk of the optimized model $\theta_S^*(\bm m)$ on the validation set $\mathcal{D}_\text{val}$ using Cross Entropy loss $\ell_\text{CE}$:
\begin{equation}
    \widehat{\mathcal{L}}(\theta_S^{*}(\bs{m})) = \frac{1}{M} \sum_{z_i \in D_{\text{val}}} \ell_\text{CE} \big(z_i, \theta_S^*(\bs{m})\big).
    \label{obj: outer obj}
\end{equation}
Intuitively, this bilevel formulation seeks the optimal policy such that the sampled subset induces a model that generalizes best on the validation set.

However, directly solving this bilevel optimization yields a gradient estimate of the form
\(
\nabla_{\phi} \Phi(\phi) \approx 
\nabla_{\phi}\theta_S^*(\bs{m}) 
\nabla_{\theta_S}\widehat{\mathcal{L}}(\theta_S^*(\bs{m}))  
\),
which requires implicit differentiation of the inner optimum, \ie, $\nabla_{\phi}\theta_S^*(\bs{m})$, and is thus computationally prohibitive for deep neural networks.
By exploiting the probabilistic formulation of our Beta Policy $\pi_{\phi}^r$, we circumvent this issue by adopting a policy gradient estimator, thereby eliminating the need for costly implicit differentiation.

Particularly, we establish the following result:
\begin{theorem}[Unbiased Policy-Gradient Estimator]
    \label{thm:unbiased-pg}
    Let $\pi_\phi^r$ be a differentiable sampling policy over mask vectors $\bm m \in \{0, 1\}^N$.
    Define the policy-gradient estimator
    \begin{equation}
    \hat{g}(\bm m) = \widehat{\mathcal{L}} \left( \theta_S^{*}(\bs{m}) \right) \nabla_{\phi} \ln p(\bm m | \phi, r).
    \label{eq: policy gradient}
    \end{equation}
    Then $\hat{g}(\bm m)$ is an unbiased estimator of the true gradient $\nabla_\phi \Phi(\phi)$:
    \begin{equation}
        \mathbb{E}_{\bm m \sim \pi_\phi^r} \left[ \hat{g}(\bm m) \right]
        = \nabla_\phi \Phi(\phi).
    \end{equation}
\end{theorem}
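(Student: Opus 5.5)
The plan is the standard score-function (REINFORCE) argument, made rigorous by exploiting the fact that the sample space of $\pi_\phi^r$ is finite. Let $\mathcal{M}_r=\{\bm m\in\{0,1\}^N : \|\bm m\|_1=K_r\}$, a finite set of size $\binom{N}{K_r}$. We then write the outer objective in \eqref{obj: outer} as a finite sum,
\begin{equation*}
\Phi(\phi)=\sum_{\bm m\in\mathcal{M}_r} p(\bm m\mid\phi,r)\,\widehat{\mathcal{L}}\big(\theta_S^*(\bm m)\big).
\end{equation*}
The key observation is that the inner solution $\theta_S^*(\bm m)$ defined by \eqref{obj: inner} depends only on the mask $\bm m$ (and the fixed teacher $\theta_T$), not on $\phi$. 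Hence each coefficient $c(\bm m):=\widehat{\mathcal{L}}(\theta_S^*(\bm m))$ is a constant with respect to $\phi$. This is exactly why the implicit term $\nabla_\phi\theta_S^*(\bm m)$ disappears. If the $\arg\min$ is not unique, we fix any deterministic selection rule, so that $c(\bm m)$ is well defined.

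Next, since the sum is finite, differentiation and summation commute with no dominated-convergence argument required, giving $\nabla_\phi\Phi(\phi)=\sum_{\bm m}c(\bm m)\nabla_\phi p(\bm m\mid\phi,r)$. We then apply the log-derivative identity $\nabla_\phi p=p\,\nabla_\phi\ln p$. This step requires $p(\bm m\mid\phi,r)>0$ for every $\bm m\in\mathcal{M}_r$, which we verify as follows.
\begin{itemize}
\item If the normalized scores satisfy $\hat s_i\in(0,1)$, then each $p_i(\phi)>0$ in \eqref{eq:beta_pdf}.
\item In that case $p(\bm m\mid\phi,r)$, which is proportional to $\prod_i p_i(\phi)^{m_i}$ on $\mathcal{M}_r$, is strictly positive.
\item The rank-to-percentile map can be chosen, e.g. $\hat s_i=(\mathrm{rank}_i-\tfrac12)/N$, to avoid the endpoints $0$ and $1$.
\end{itemize}
Substituting the identity gives
\begin{equation*}
\nabla_\phi\Phi(\phi)=\sum_{\bm m}p(\bm m\mid\phi,r)\,c(\bm m)\,\nabla_\phi\ln p(\bm m\mid\phi,r)=\mathbb{E}_{\bm m\sim\pi_\phi^r}[\hat g(\bm m)],
\end{equation*}
which is the claim.

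The remaining obstacle is differentiability of $p(\bm m\mid\phi,r)$ in $\phi$, including through the normalizers $Z$ and $Z_{K_r}$. Our approach is to write the log-probability as
\begin{equation*}
\ln p(\bm m\mid\phi,r)=\sum_i m_i\ln p_i(\phi)-\ln Z_{K_r}(\phi),
\end{equation*}
where $Z_{K_r}=\sum_{\bm m'\in\mathcal{M}_r}\prod_i p_i^{m'_i}$ is a finite sum of products of smooth positive functions, and hence smooth and positive. Each $\ln p_i(\phi)$ is smooth whenever $\alpha(\phi),\beta(\phi)>0$ are differentiable, since both $\ln B$ and $\ln\hat s_i$ are smooth there.

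One further point must be handled: the overall factor $1/(Z\,B)$ in $p_i$ appears exactly $K_r$ times in every product. It therefore cancels against $Z_{K_r}$ and contributes nothing to the gradient. This cancellation is not needed for unbiasedness, but it confirms that the score $\nabla_\phi\ln p$ is well defined.
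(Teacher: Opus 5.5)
Your proposal is correct and follows the same route as the paper: you expand $\Phi(\phi)$ as a finite sum over masks, differentiate term by term, and apply $\nabla_\phi p = p\,\nabla_\phi \ln p$. Your extra checks (that $\widehat{\mathcal{L}}(\theta_S^*(\bm m))$ does not depend on $\phi$, that $p(\bm m\mid\phi,r)>0$, and that $Z_{K_r}$ is smooth) make explicit what the paper's proof uses without comment, and the interior-percentile change is optional, because any mask with $p(\bm m\mid\phi,r)=0$ is a zero of a nonnegative differentiable function, so its gradient vanishes and it contributes nothing to either side.
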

The proof is provided in Appendix~\ref{app:ori_PGE}.
\cref{thm:unbiased-pg} ensures that $\hat{g}(\bm m)$ provides an unbiased stochastic gradient for optimizing $\phi$.
Consequently, given the inner-loop optimum $\theta_S^{*}(\bs{m})$, the policy parameter $\phi$ can then be updated via stochastic gradient descent:
\begin{align}
    \phi \leftarrow  \phi - \eta \, \widehat{\mathcal{L}}\left(\theta_S^{*}(\bs{m})\right) \, \nabla_{\phi}\ln  p(\bs{m} | \phi, r), \label{eq: SGD}
\end{align}
where $\eta$ denotes the learning rate.

\subsection{Efficient Bilevel Optimization in Feature Space}
\label{sec: efficient bilevel}

While the outer-loop objective in Eq.~\ref{obj: outer} can be efficiently optimized via the policy gradient estimator (Eq.~\ref{eq: SGD}), the inner-loop optimization in Eq.~\ref{obj: inner} remains computationally expensive, as it requires retraining the student network to convergence for every policy update.

To address this issue, we leverage the structure of the KD setting.
Remarkably, instead of training the student model’s full parameter set $\theta$ from scratch, we utilize a surrogate student
by attaching a trainable linear classification head $f_L$ to the fixed pretrained teacher model with parameters $\theta_T$.
Let $f_T(\cdot)$ denote the fixed feature extractor parameterized by $\theta_T$.
Each training sample $x_i$ is thus represented by its frozen feature $\mathbf{f}_i = f_T(x_i) \in \mathbb{R}^d$.
Accordingly, the student model is formulated as:
\begin{equation}
    f_{\theta_S}(x_i) := f_L(\mathbf{f}_i) = \bs{C}\,\mathbf{f}_i,
\end{equation}
where $\bs{C} \in \mathbb{R}^c \times \mathbb{R}^d$ denotes the parameters of the linear classifier that maps feature vectors to class logits.

Finally, the inner-loop objective in Eq.~\ref{eq: inner obj} can be reformulated as
\begin{equation}
    \bm C^*(\bm m) = \underset{\bm C}{\arg \min} \frac{1}{K} \sum_{z_i \in D_{\text{sub}}} \ell_\text{KD}(z_i, \bm C, \theta_T),
    \label{eq: easy inner obj}
\end{equation}
which can be efficiently optimized since it involves learning only a single linear layer.
Further detailed implementation and the complete pipeline of IF-Beta are provided in Appendix~\ref{app:implementation}.

\vspace{0.05in}
\noindent\textbf{Remark.}
It is worth noting that the bilevel formulation in Eq.~\ref{eq: inner obj} presented in \Cref{sec: bilevel optimization} is general and can be directly applied to both KD scenarios~\cite{DBLP:journals/corr15/KnowledgeDistilling} and standard supervised classification~\cite{cifar,DBLP:conf/cvpr/imagenet}.
In particular, by replacing the inner objective in Eq.~\ref{eq: inner obj} with a standard CE loss, our approach naturally transfers to a general data pruning framework.
To demonstrate this versatility, we further conduct standard data pruning experiments following prior works~\cite{DBLP:conf/iclr/ccs,bws,DUAL}, with the results reported in \Cref{sec: standard_data_pruning}.

\section{Experiments}
\label{sec:exp}

In this section, we present extensive experiments to evaluate the effectiveness and generality of IF-Beta.
We first compare existing data pruning methods for distillation across three representative datasets (CIFAR-10/100~\cite{cifar} and ImageNet~\cite{DBLP:conf/cvpr/imagenet}) under various pruning ratios, and with different teacher architectures (\eg, ResNet~\cite{DBLP:conf/cvpr/Resnet}, WideResNet~\cite{zagoruyko2016wideresnet}, ViT~\cite{DBLP:conf/iclr/ViT}).

\vspace{0.05in}
\noindent\textbf{Baselines.}
The baselines considered in this work can be grouped into two categories:
(1) methods that require partial or full retraining to obtain training dynamics, including EL2N~\cite{el2n}, GraNd~\cite{el2n}, Forgetting~\cite{toneva2018forgetting}, and DUAL~\cite{DUAL}; and
(2) methods that do not require retraining, including Random~\cite{baruch2025KDinPruning} and Medium-Difficulty~\cite{iclr25medium}.
In addition, to fairly compare with recent score-based sampling strategies (CCS~\cite{DBLP:conf/iclr/ccs} and BWS~\cite{bws}), we implement them with our introduced IF-based scores, resulting in IF-CCS and IF-BWS, which also do not require retraining.

\vspace{0.05in}
\noindent\textbf{Training Details.}
For distillation, all models are trained with the loss in Eq.~\ref{eq:kd_total}.
Noting that Chen et al.~\cite{iclr25medium} also introduce a new distillation loss, we additionally report results under this loss and denote it as Medium-Difficulty*.
Due to computational constraints, on ImageNet we only compare against EL2N and the six baselines that do not require retraining (including Medium-Difficulty*).
For CIFAR-10/100, all models (across all pruning ratios) are trained with a batch size of 128 for 40K iterations (about 200 epochs over the entire dataset), while for ImageNet all student models are trained with a batch size of 128 for 300K iterations (about 30 epochs over the entire dataset).
All results are averaged over three random seeds (0, 1, 2).
Additional experimental details are provided in Appendix~\ref{app:exp_details}.

\begin{figure*}[tbp]
\centering

\includegraphics[width=\linewidth]{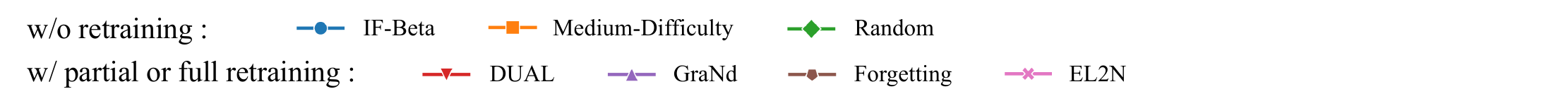}
\\
\vspace{-0.05in}
\raisebox{5pt}{
\begin{minipage}{0.035\linewidth}
  \centering
  \includegraphics[width=\linewidth]{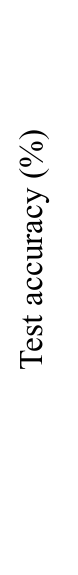}
\end{minipage}
}
\hspace{-0.1in}
\begin{minipage}{0.3\linewidth}
  \centering
  \subcaptionbox{CIFAR-10\label{fig: main_cifar10}}{%
    \includegraphics[width=\linewidth]{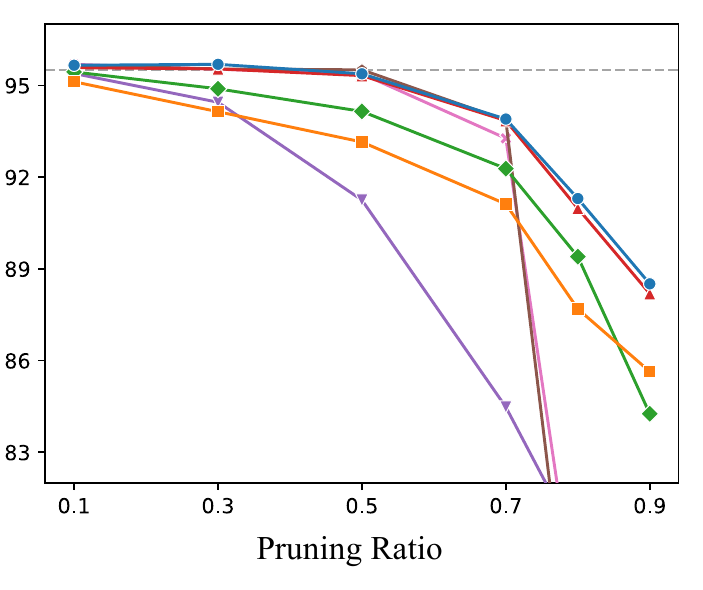}}
\end{minipage}
\hspace{0.02in}
\begin{minipage}{0.3\linewidth}
  \centering
  \subcaptionbox{CIFAR-100\label{fig: main_cifar100}}{%
    \includegraphics[width=\linewidth]{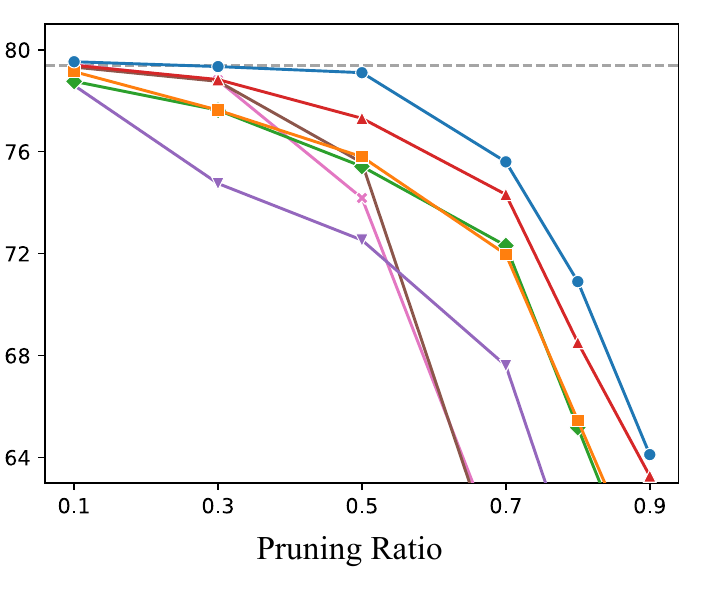}}
\end{minipage}
\hspace{0.02in}
\begin{minipage}{0.3\linewidth}
  \centering
  \subcaptionbox{ImageNet\label{fig: main_imagenet}}{%
    \includegraphics[width=\linewidth]{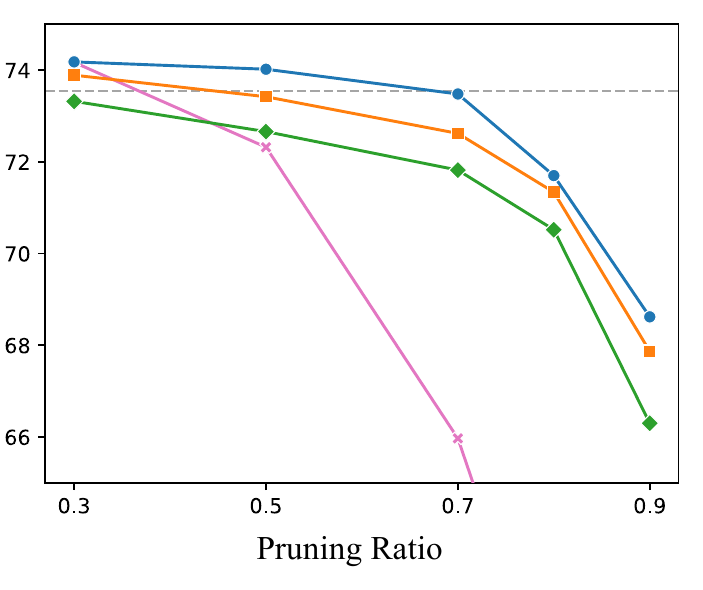}}
\end{minipage}
\vspace{-0.1in}
\caption{Performance comparison between IF-Beta and other baselines on CIFAR-10/100 and ImageNet under the KD setting, where for CIFAR-10/100 both teacher and student are ResNet-18, and for ImageNet both are ResNet-50. The pruning ratio is the fraction of examples removed from the original datasets. The dashed horizontal line denotes the student distilled on the full dataset (without pruning). Detailed numerical results are provided in Appendix~\ref{app:exp}.}
\vspace{-0.1in}
\label{fig: main_kd}
\end{figure*}

\subsection{Main Results: Data Pruning for Knowledge Distillation}
\label{sec: exp_main_kd}

We first evaluate data pruning under the KD setting where the teacher and student 
share the same architecture.
As shown in \cref{fig: main_kd}, IF-Beta consistently outperforms all baselines 
across all datasets and pruning ratios.
Notably, even compared to the recent method DUAL~\cite{DUAL}, which requires partial retraining (60 epochs) to extract training dynamics, IF-Beta consistently achieves superior accuracy.
Compared with recent Medium-Difficulty~\cite{iclr25medium}, which also does not require retraining, IF-Beta even exhibits a clear performance advantage over it.

Remarkably, IF-Beta can surpass full-data distillation using only a subset of the training data: retaining 70\% on CIFAR-10 achieves 95.69\% vs.\ 95.50\%, 90\% on CIFAR-100 reaches 79.53\% vs.\ 79.38\%, and 50\% on ImageNet already yields 74.02\% vs.\ 73.54\%.
This suggests that in KD, not all training samples are equally beneficial---some may actively impede student generalization, and carefully pruning them away yields a cleaner and more informative training signal.

Finally, IF-Beta is further validated under heterogeneous teacher architectures 
spanning ResNet-50/101~\cite{DBLP:conf/cvpr/Resnet}, WideResNet~\cite{zagoruyko2016wideresnet}, 
and ViT-Base~\cite{DBLP:conf/iclr/ViT}, consistently outperforming all 
retraining-free baselines across all settings (see Appendix~\ref{app:exp}).

\subsection{Further Analysis of IF-Beta}

Beyond the main results, we further examine IF-Beta from four complementary perspectives: (i) sample efficiency under constrained training budgets, (ii) end-to-end compute savings relative to full-data training, (iii) advantages over other efficient KD methods, and (iv) generality across diverse distillation objectives.

\begin{table}[tb]
\centering
\caption{Performance comparison between full-data distillation, IF-Beta, and Random
(pruning 10\%, 30\%, 50\%) under limited training budgets on CIFAR-10/100
(10K iterations) and ImageNet (150K iterations).
\textbf{Bold} denotes the best results.}
\label{tab: budget_overview}
\vspace{-0.05in}
\setlength{\tabcolsep}{4pt}
\scriptsize
\begin{tabular}{llccc}
\toprule
Prune & Method & CIFAR-10 & CIFAR-100 & ImageNet \\
\midrule
0\% & Full-data
& 86.62{\tiny $\pm$ 1.67} & 63.54{\tiny $\pm$ 2.21} & 57.16{\tiny $\pm$ 0.14} \\
\midrule
\multirow{2}{*}{10\%}
& IF-Beta
& \textbf{87.81}{\tiny $\pm$ 0.58} & \textbf{66.26}{\tiny $\pm$ 0.91} & -- \\
& Random
& 87.30{\tiny $\pm$ 0.62} & 64.95{\tiny $\pm$ 0.42} & -- \\
\midrule
\multirow{2}{*}{30\%}
& IF-Beta
& \textbf{89.23}{\tiny $\pm$ 1.05} & \textbf{66.53}{\tiny $\pm$ 1.11} & \textbf{61.35}{\tiny $\pm$ 0.09} \\
& Random
& 88.03{\tiny $\pm$ 0.83} & 63.50{\tiny $\pm$ 2.42} & 57.48{\tiny $\pm$ 0.17} \\
\midrule
\multirow{2}{*}{50\%}
& IF-Beta
& \textbf{90.80}{\tiny $\pm$ 1.29} & \textbf{67.31}{\tiny $\pm$ 1.52} & \textbf{60.70}{\tiny $\pm$ 0.12} \\
& Random
& 86.37{\tiny $\pm$ 0.85} & 63.39{\tiny $\pm$ 1.29} & 57.12{\tiny $\pm$ 0.15} \\
\bottomrule
\end{tabular}
\vspace{-0.1in}
\end{table}

\vspace{0.05in}
\noindent\textbf{Sample Efficiency under Constrained Budgets.}
As shown in \cref{tab: budget_overview}, we evaluate IF-Beta under severely 
constrained training budgets (10K iterations for CIFAR-10/100 and 150K for 
ImageNet). IF-Beta consistently outperforms both full-data distillation and 
random pruning across all pruning ratios and datasets---with 50\% data pruned, 
IF-Beta achieves 90.80\% on CIFAR-10, 67.31\% on CIFAR-100, and 60.70\% on 
ImageNet, all exceeding the full-data baselines.
Crucially, the consistent margin over random pruning demonstrates that these 
gains are not solely attributable to a more favorable training regime induced 
by fewer samples,\footnote{With a fixed iteration budget, training on a smaller 
subset effectively increases the number of passes over each sample, which alone 
can improve generalization.} but stem from effective sample selection itself.

\begin{table}[tbp]
\centering
\caption{Compute comparison between IF–Beta and full–data training when KD.
Relative cost denotes the ratio of time between: (i) the total number of iterations IF–Beta actually consumes until surpassing the final full–data students' performance (including the pruning process), and (ii) the standard full–data training schedule.
\textbf{Bold} denotes the best results.
}
\setlength{\tabcolsep}{4pt}
\scriptsize
\vspace{-0.05in}
\begin{tabular}{lccccc}
\toprule
Dataset & Prune & Relative Cost & \textbf{IF-Beta} & \textbf{Full Data} \\
\midrule
CIFAR-10   & 30\% & 0.806$\times$ & \textbf{95.64} & 95.50 \\
CIFAR-100  & 10\% & 0.845$\times$ & \textbf{79.45} & 79.38 \\
ImageNet    & 50\% & 0.915$\times$ & \textbf{73.67} & 73.54 \\
\bottomrule
\end{tabular}
\label{table:less_data_less_time}
\vspace{-0.1in}
\end{table}

\vspace{0.05in}
\noindent\textbf{End-to-End Compute Savings.}
We then examine how much time is actually needed (with less data) to reach the final performance of the full–data student.
As shown in \cref{table:less_data_less_time}, across all three datasets, IF–Beta can surpass the final full–data student using fewer samples and in much lower time cost, even when the cost of IF–Beta pruning process is included.

\begin{table}[t]
\centering
\caption{Test accuracy and training time on ImageNet of prior efficient KD methods and our IF-Beta using 70\% of samples, where the teacher is ResNet-34 and the student is MobileNet. \textbf{Bold} denotes the best results and \underline{underline} denotes the second-best.}
\vspace{-0.05in}
\setlength{\tabcolsep}{4pt}
\scriptsize
\begin{tabular}{c|ccccc}
\toprule
& Baseline~\cite{mobilenet} & Zipf's LS~\cite{DBLP:conf/eccv/LiangLBZTLF22} & USKD~\cite{DBLP:conf/iccv/YangZLZY023} & \makecell{Medium-\\Difficulty*}~\cite{iclr25medium} & IF-Beta \\
\midrule
Acc (\%)
& 69.57 & 69.59 & 70.38 & \underline{70.92} & \textbf{71.20} \\
\midrule
Time (h)
& 39.86 & $>39.86$ & $>39.86$ & \underline{36.98} & \textbf{19.24} \\
\bottomrule
\end{tabular}
\vspace{-0.05in}
\label{table:skd_compare}
\end{table}

\vspace{0.05in}
\noindent\textbf{Comparison with Efficient KD Methods.}
Following the evaluation protocol of Chen et al.~\cite{iclr25medium}, we compare IF-Beta against other efficient KD methods by distilling ResNet-34 into MobileNet on ImageNet using a single NVIDIA A100 GPU.
As shown in \cref{table:skd_compare}, IF-Beta achieves the best accuracy of 71.20\% while requiring only 19.24 hours---less than half the cost of the full-data Baseline (39.86h) and Medium-Difficulty* (36.98h).
The efficiency gains stem from three factors: reduced data volume (70\% subset), no repeated teacher queries per epoch, and a higher-quality subset that preserves distillation effectiveness even under one-shot teacher logits.
Further detailed discussion is provided in Appendix~\ref{app:comparison_other_kd}.


\begin{table}[tb]
\centering
\caption{Performance under different KD losses and pruning ratios on CIFAR-10/100. Med-Dif denotes Medium Difficulty. \textbf{Bold} denotes the best results}
\label{tab:loss_prune_comp}
\vspace{-0.05in}
\resizebox{0.88\columnwidth}{!}{
\setlength{\tabcolsep}{3pt}
\scriptsize
\begin{tabular}{ll|cc|cc|cc}
\toprule
\multirow{2}{*}{Prune} & \multirow{2}{*}{Method}
& \multicolumn{2}{c|}{Relational KD} 
& \multicolumn{2}{c|}{FitNets} 
& \multicolumn{2}{c}{Attention Transfer} \\
& & CIFAR-10 & CIFAR-100 & CIFAR-10 & CIFAR-100 & CIFAR-10 & CIFAR-100 \\
\midrule
0\% & Full-data
& 95.52 & 78.56 & 95.35 & 78.32 & 95.70 & 78.77 \\
\midrule
\multirow{3}{*}{30\%}
& Random   & 95.07 & 76.19 & 94.57 & 75.08 & 94.52 & 75.63 \\
& Med-Dif*  & 94.87 & 77.04 & 93.97 & 75.53 & 94.02 & 75.77 \\
& IF-Beta  & \textbf{95.24} & \textbf{78.59} & \textbf{95.05} & \textbf{77.94} & \textbf{95.34} & \textbf{78.52} \\
\midrule
\multirow{3}{*}{50\%}
& Random   & 94.64 & 74.09 & 93.65 & 71.46 & 93.65 & 71.73 \\
& Med-Dif*  & 94.11 & 74.26 & 92.79 & 71.33 & 92.96 & 72.65 \\
& IF-Beta  & \textbf{95.67} & \textbf{77.82} & \textbf{95.16} & \textbf{77.56} & \textbf{95.04} & \textbf{77.58} \\
\midrule
\multirow{3}{*}{70\%}
& Random   & 93.70 & 69.21 & 91.54 & 64.99 & 92.06 & 66.08\\
& Med-Dif*  & 93.18 & 69.54 & 90.49 & 65.04 & 91.22 & 66.69 \\
& IF-Beta  & \textbf{94.53} & \textbf{73.36} & \textbf{93.49} & \textbf{71.52} & \textbf{93.64} & \textbf{72.26} \\
\bottomrule
\end{tabular}
}
\vspace{-0.1in}
\end{table}

\vspace{0.05in}
\noindent\textbf{Generality Across Distillation Objectives.}
To verify that IF-Beta is not tied to a specific distillation loss, we evaluate it under three representative KD objectives on CIFAR-10/100: Relational KD~\cite{DBLP:conf/cvpr/RKD} (relation-based), FitNets~\cite{DBLP:journals/corr/fitnets} (feature-based), and Attention Transfer~\cite{DBLP:conf/iclr/attentiontransfer} (attention-based).
As shown in \cref{tab:loss_prune_comp}, IF-Beta consistently outperforms both 
Random pruning and Medium-Difficulty across all losses, pruning ratios, and 
datasets, demonstrating its effectiveness and generality.

\vspace{0.05in}
\noindent\textbf{Remark.} 
Taken together, these results suggest a broader perspective on efficient KD: the \textit{quality} of the training data may matter more than the \textit{specifics} of the distillation algorithm. 
Across diverse training budgets, distillation objectives, and teacher-student pairs, IF-Beta consistently benefits from selecting a more informative subset---suggesting that careful data curation is a universally effective and complementary strategy for improving both the efficiency and effectiveness of knowledge distillation.

\begin{table}[t]
\centering
\caption{Performance comparison of different sampling strategies (IF-Beta vs. IF-CCS and IF-BWS) under the KD setting across various pruning ratios on CIFAR-10/100 and ImageNet. \textbf{Bold} denotes the best results and \underline{underline} denotes the second-best.}
\setlength{\tabcolsep}{4pt}
\scriptsize
\vspace{-0.05in}
\begin{tabular}{lccccccc}
\toprule
Method     & 90\% & 80\% & 70\% & 50\% & 30\% & 10\% \\
\midrule
\multicolumn{7}{c}{CIFAR-10} \\
IF-CCS & \underline{88.18} & \underline{90.87} & \underline{93.39} & 93.77 & 94.65 & 95.40 \\
IF-BWS & 86.20 & 86.50 & 89.80 & \underline{94.82} & \underline{95.51} & \underline{95.47} \\
IF-Beta & \textbf{88.51} & \textbf{91.30} & \textbf{93.90} & \textbf{95.38} & \textbf{95.69} & \textbf{95.66} \\
\midrule
\multicolumn{7}{c}{CIFAR-100} \\
IF-CCS & \underline{63.73} & \underline{70.42} & \underline{72.22} & \underline{77.64} & 78.47 & 78.58 \\
IF-BWS & 58.71 & 63.91 & 71.75 & 75.81 & \underline{78.85} & \underline{78.95} \\
IF-Beta & \textbf{64.11} & \textbf{70.90} & \textbf{75.60} & \textbf{79.10} & \textbf{79.34} & \textbf{79.53} \\
\midrule
\multicolumn{7}{c}{ImageNet} \\
IF-CCS & 65.73 & 68.31 & 71.68 & 72.98 & 73.32 & -\\
IF-BWS & \underline{67.90} & \underline{70.56} & \underline{72.86} & \underline{73.63} & \underline{73.66} & -\\
IF-Beta & \textbf{68.62} & \textbf{71.70} & \textbf{73.48} & \textbf{74.02} & \textbf{74.18} & - \\
\bottomrule
\end{tabular}
\vspace{-0.1in}
\label{table:if_bws_ccs_beta}
\end{table}

\subsection{Ablation Study: Advantages of the Learnable Beta Policy}

We also examine the performance of CCS~\cite{DBLP:conf/iclr/ccs} and BWS~\cite{bws} when combined with our influence scores (\ie, IF-CCS and IF-BWS) on CIFAR-10/100 and ImageNet. 
As shown in \cref{table:if_bws_ccs_beta}, IF-Beta consistently outperforms both 
variants across all pruning ratios and datasets, demonstrating that the learnable 
Beta Policy provides clear advantages over fixed-threshold and fixed-window 
selection, consistent with our analysis in \Cref{sec: Beta Policy}.

\subsection{Data pruning for non-KD Scenario}
\label{sec: standard_data_pruning}

\begin{figure*}[tbp]
\centering

\includegraphics[width=0.7\linewidth]{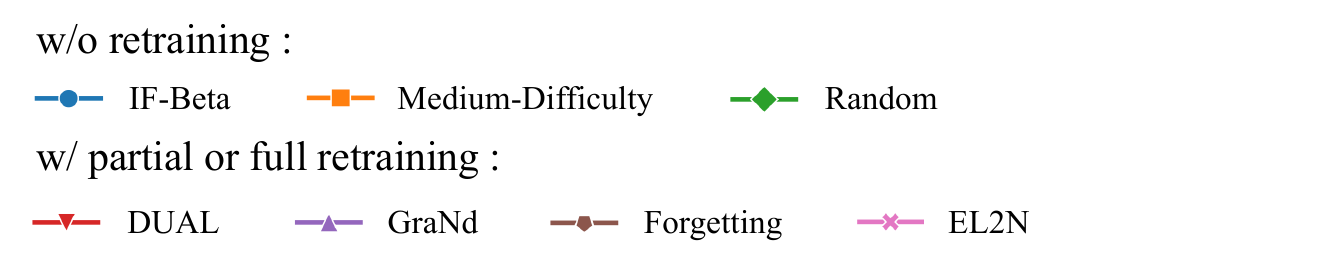}
\\
\vspace{-0.05in}
\raisebox{6pt}{
\begin{minipage}{0.036\linewidth}
  \centering
  \includegraphics[width=\linewidth]{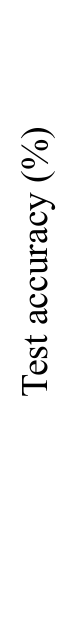}
\end{minipage}
}
\hspace{-0.1in}
\begin{minipage}{0.35\linewidth}
  \centering
  \subcaptionbox{CIFAR-10\label{fig: std_cifar10}}{%
    \includegraphics[width=\linewidth]{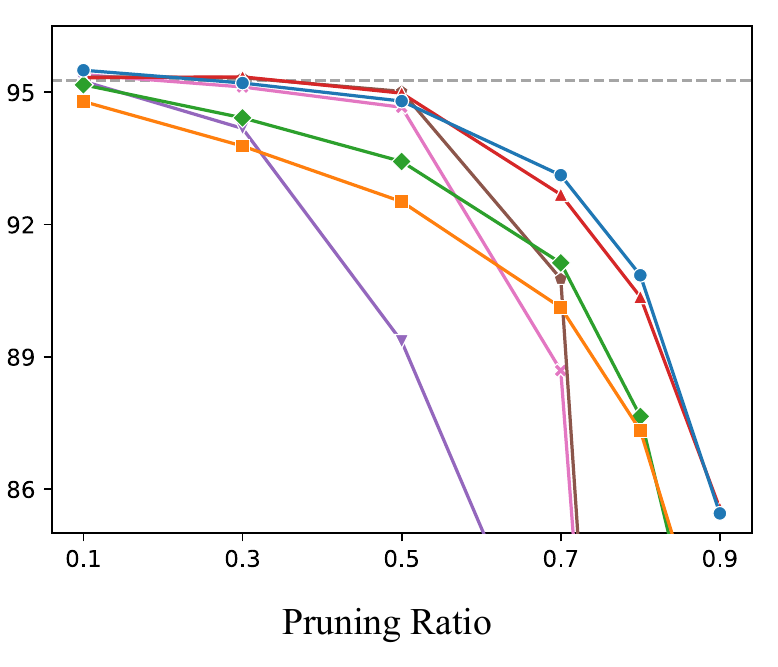}}
\end{minipage}
\begin{minipage}{0.35\linewidth}
  \centering
  \subcaptionbox{CIFAR-100\label{fig: std_cifar100}}{%
    \includegraphics[width=\linewidth]{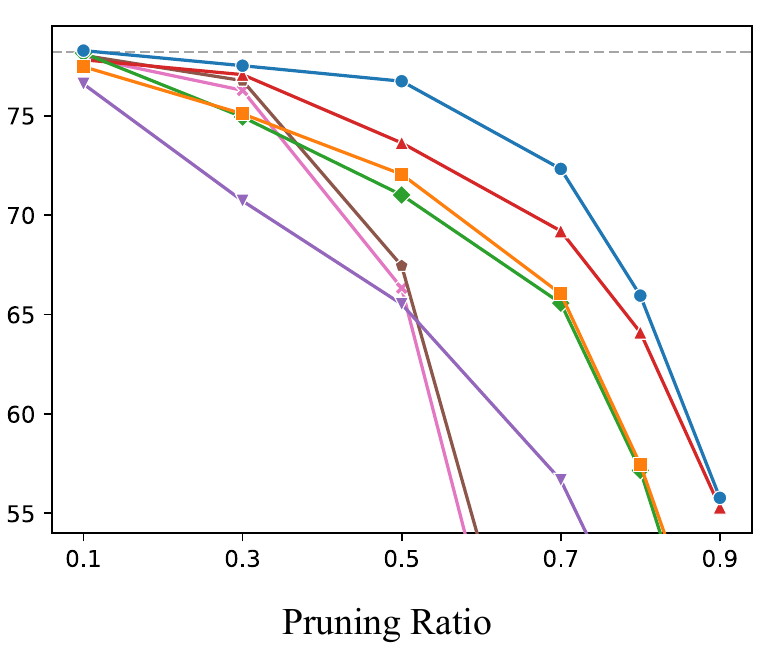}}
\end{minipage}
\vspace{-0.1in}
\caption{Performance comparison between IF-Beta (w/o KD) and other baselines with ResNet-18 on CIFAR-10/100 under standard data pruning setting (\ie, training without KD). The pruning ratio is the fraction of examples removed from the original datasets. The dashed horizontal line denotes the model trained on the full dataset. Detailed numerical results are provided in Appendix~\ref{app:exp}.}
\vspace{-0.3in}
\label{fig:standard_data_pruning}
\end{figure*}

To further verify the generality of IF-Beta beyond KD, we also evaluate it under the standard data pruning setting following CCS~\cite{DBLP:conf/iclr/ccs}.  
In this case, the inner objective in Eq.~\ref{eq: inner obj} is replaced with a standard CE loss, corresponding to conventional supervised training rather than distillation.  
As shown in \cref{fig:standard_data_pruning}, IF-Beta remains competitive and consistently outperforms other baselines across all pruning ratios on CIFAR--10/100.  
These results indicate that IF-Beta is not tied to KD, but can also serve as a general-purpose pruning mechanism under standard supervised learning, without relying on training-dynamics trajectories.

\begin{table}[b]
\centering
\caption{Performance comparison under standard data pruning with various data pruning ratios on CIFAR-10/100.
IF-Beta (w/o KD) uses an in-domain pretrained model to compute influence scores, while IF-Beta* (w/o KD) uses a generic ImageNet-pretrained ResNet-18 (from PyTorch model zoo~\cite{pytorch}). \textbf{Bold} denotes the best results and \underline{underline} denotes the second-best.}
\setlength{\tabcolsep}{4pt}
\scriptsize
\vspace{-0.05in}
\begin{tabular}{lccccccc}
\toprule
Method       & 90\% & 80\% & 70\% & 50\% & 30\% & 10\% \\
\midrule
\multicolumn{7}{c}{CIFAR-10} \\
Random     & 80.17 & 87.65 & 91.13 & 93.43 & 94.42 & 95.17 \\
IF-Beta    & \textbf{85.45} & \textbf{90.85} & \textbf{93.12} & \textbf{94.80} & \textbf{95.21} & \textbf{95.50} \\
IF-Beta*   & \underline{85.34} & \underline{90.22} & \underline{92.22} & \underline{94.59} & \underline{95.14} & \underline{95.37} \\
\midrule
\multicolumn{7}{c}{CIFAR-100} \\
Random     & 44.58 & 57.16 & 65.57 & 71.01 & 74.93 & \underline{78.14} \\
IF-Beta    & \textbf{55.78} & \textbf{65.95} & \textbf{72.32} & \textbf{76.73} & \textbf{77.51} & \textbf{78.27} \\
IF-Beta*   & \underline{53.31} & \underline{63.09} & \underline{69.50} & \underline{74.79} & \underline{76.54} & 77.60 \\
\bottomrule
\end{tabular}
\label{table:transfer_if}
\vspace{-0.15in}
\end{table}


\vspace{0.05in}
\noindent\textbf{Further Discussion.}
IF-Beta is originally designed for KD, where a pretrained teacher model is naturally available to compute influence scores.
However, under standard data pruning, such in-domain pretrained models are typically not provided unless one retrains a model first.
Fortunately, thanks to the nature of FVM~\cite{ye2025robust}, we can simply replace the in-domain model with a generic off-the-shelf pretrained model (\eg, trained on ImageNet) to compute influence scores.
As shown in \cref{table:transfer_if}, under the setting of conventional training without KD, using only an ImageNet-pretrained ResNet-18 from the PyTorch model zoo~\cite{pytorch} to compute IF on CIFAR-10/100 still yields competitive pruning performance, with only minor degradation. 
This means that, for standard data pruning, IF-Beta can completely avoid any retraining cost while still providing strong pruning quality.

\section{Conclusion}
\label{sec:conclusion}

This paper proposes IF-Beta, a data-pruning framework for knowledge distillation that accelerates student training by identifying the most informative samples for distillation.
IF-Beta unifies influence function-based scoring with a learnable Beta Policy 
parameterized by a Beta distribution, which can be efficiently optimized through a bilevel formulation in the teacher's feature space.
The influence function provides a reliable and retraining-free estimate of sample importance, while the Beta Policy overcomes the rigidity of heuristic selection strategies by adaptively learning the optimal sampling distribution.
Extensive experiments on CIFAR-10/100 and ImageNet demonstrate that IF-Beta 
consistently outperforms existing baselines across a wide range of pruning ratios, and remarkably enables students trained on substantially less data and under lower compute budgets to surpass full-data distillation.
Beyond KD, IF-Beta generalizes naturally to standard data pruning settings, demonstrating its effectiveness as a broadly applicable pruning strategy regardless of the training objective.
These results suggest that \textit{which} samples to distill may matter more than \textit{how} to distill them, positioning data curation as a principled and practical lever for improving efficiency in modern training pipelines.

\section*{Acknowledgements}
This work was supported by the National Nature Science Foundation of China (62472097), Shanghai Municipal Science and Technology Commission (Grant No.25511102200), Fudan Kunpeng \& Ascend Center of Cultivation, and Shanghai science and technology committee (Grant No.25511106200). The
computations in this research were performed on the CFFF platform of Fudan University.

%
%
\bibliographystyle{splncs04}
\bibliography{main}

\newpage
\appendix
\onecolumn

\begin{center}
{\Large\bf Supplementary Materials for "Distill on a Diet: Efficient Knowledge Distillation via Learnable Data Pruning"}
\end{center}

\noindent\textbf{Roadmap.}
We first provide implementation and experimental details in Appendix~\ref{app:implementation} and~\ref{app:exp_details}.
Additional experimental results are presented in Appendix~\ref{app:exp}, followed by further analysis on sampling strategies in Appendix~\ref{app:analysis_sampling}.
We then discuss related work in Appendix~\ref{app: related works}, and finally provide the missing proofs and derivations in 
Appendix~\ref{app:ori_PGE} and~\ref{app:derivation}.

\section{Implementation Details}
\label{app:implementation}

\begin{algorithm}[ht]
\small
\caption{Overall Pipeline}
\label{alg:overall}
\begin{algorithmic}[1]
    \Require Pretrained teacher $f_T(\cdot;\theta_T^\star)$; training set $D_{\text{tr}}$; validation set $D_{\text{val}}$; pruning ratio $r$; initial policy parameters $\phi_0$; outer iterations $T$; initial student $f_S(\cdot;\theta_S)$; linear classifier $\bs{C}$
    \vspace{0.4em}
    \hrule
    \vspace{0.4em}
    
    \Statex \textbf{(1) Solving for Flat-Validation-Minima}
    \vspace{0.4em}
    
    \State Initialize $\theta \gets \theta_T^\star$
    
    \State $\tilde{\theta}_T \leftarrow \arg\min_{\theta}\,\hat{R}^\gamma_{\text{val}}(\theta)$ \Comment{Sharpness-Aware Optimization.}

    \vspace{0.4em}
    \Statex \textbf{(2) Computing Influence}
    \vspace{0.4em}
    
    \For{$z_i=(x_i,y_i)\in D_{\mathrm{tr}}$} \Comment{via $\tilde{\theta}$.}
    \State $\mathcal{I}_i \gets \tilde{g}_{z_i}^\top\,\tilde{H}_{\mathrm{val}}^{-1}\,\tilde{g}_{z_i}$
    \EndFor 

    \vspace{0.4em}
    \Statex \textbf{(3) Extracting Frozen Features}
    \vspace{0.4em}
    
    \For{$z_i=(x_i,y_i)\in D_{\mathrm{tr}}$} \Comment{via pretrained $f_T(\cdot;\theta_T^\star)$.}
    \State $\mathbf{f}_i \gets f_T(x_i;\theta_T^\star)$
    \EndFor

    \vspace{0.4em}
    \Statex \textbf{(4) Bilevel Optimization}
    \vspace{0.4em}
    
    \For{$t=1,\ldots,T$} \Comment{Outer-loop.}
    
    \State Sample mask $\bm m \sim \pi_\phi^r := p(\bm m \mid \phi, r)$ \Comment{$\bm{m} = \{m_i\in\{0,1\}\,|\,\|\bm{m}\|_0=(1-r)|D_\text{tr}|\}$.}
    
    \State $\bs{C}^\star(\bm{m}) \gets \underset{\bm C}{\arg \min} \frac{1}{\|\bm m\|_0} \sum_{z_i \in D_{\text{sub}}} \ell_\text{KD}(z_i, \bm C, \theta_T^\star)$ \Comment{Inner-loop.}
    
    \State $\widehat{\mathcal{L}} \gets \mathcal{L}(\bs{C}^\star(\bm{m});\,D_{\text{val}})$
    \State $\phi \gets  \phi - \eta \widehat{\mathcal{L}} \nabla_{\phi}\ln  p(\bs{m}|\phi)$ \Comment{$p(\bm m \mid \phi, r) = \frac{1}{Z_{K_r}} \prod_{i=1}^{N} p_i(\phi)^{m_i} 1_{\|\bm m\|_1 = K_r}(\bm m)$.}
    \EndFor

    \vspace{0.4em}
    \Statex \textbf{(5) Sampling Final Coreset}
    \vspace{0.4em}

    \State $\bm m \sim \pi_\phi^r$ ;\quad $D_{\text{sub}}\gets\{z_i:m_i=1\}$ \Comment{Sample final coreset.}

    \vspace{0.4em}
    \Statex \textbf{(6) KD on the Selected Coreset}
    \vspace{0.4em}
    
    \State $\theta^\star_S \gets \arg\min_{\theta_S}\ \frac{1}{\|\bm m\|_0} \ell_\text{KD}\big(f_S(x_i;\theta_S),\,f_T(x_i;\theta^\star)\big)$ \Comment{$\ell_\text{KD}$ is the standard KD objective.}
    \State \Return $f_S(\cdot;\theta^\star_S)$
\end{algorithmic}
\end{algorithm}

We first present the overall pipeline of IF-Beta for the whole KD scenario in \cref{alg:overall}. 
This high-level algorithm describes the entire procedure from influence computation to policy-driven data pruning and the final student training. 
Next, we provide two concrete implementation clarifications: (i) how we obtain the flat-validation-minima (FVM), and (ii) how we stabilize the bilevel optimization more robustly in practice.

\vspace{0.05in}
\noindent\textbf{Obtaining Flat-Validation-Minima.}
Following Ye et al.~\cite{ye2025robust}, influence functions are sensitive to the flatness of the loss landscape.
In particular, they show that better sharpness-aware optimizers lead to noticeably more reliable influence estimates, with F-SAM \cite{cvpr24fsam} demonstrating the most stable behavior. 
Motivated by this, we adopt F-SAM to solve the flat-validation-minima objective in Eq.~\ref{eq: SAM}.

\begin{algorithm}[t]
\small
\caption{Bilevel Optimization of the Beta Policy $\pi^r_{(\mu,\bar{\tau})}$}
\label{alg:bilevel}
\begin{algorithmic}[1]
\Require normalized scores \(\{\hat{s}_i\}\); frozen features \(\{\mathbf f_i\}\);
pruning ratio \(r\); initial policy parameters \(\mu_0, \tau_0\); step size $t$;
outer steps \(T\)
\vspace{0.4em}\hrule\vspace{0.4em}

\Statex \textbf{Stage A: Discrete optimization of \(\mu\)}
\vspace{0.4em}
\State $\bar{\tau} \gets \tau_0$ 
\For{\(\mu \in \{0,t,2t,\ldots,\lfloor 1/t\rfloor t\}\)}
  \State \(\bm m \sim \pi^r_{(\mu,\bar{\tau})}\) 
  \State \(\bs C^\star(\bm m) \gets \arg\min_{\bs C}\ \frac{1}{\|\bm m\|_0}\sum_i m_i\,\ell_\text{KD}(z_i, \bm C, \theta_T)\)
  \State \(J(\mu) \gets \widehat{\mathcal L}\big(\bs C^\star(\bm m);\,D_{\text{val}}\big)\)
\EndFor
\State \(\mu^\star \gets \arg\min_{\mu} J(\mu)\)

\vspace{0.4em}
\Statex \textbf{Stage B: Continual optimization of \(\tau\)}
\vspace{0.4em}
\State \(\tau \gets \tau_0\)
\For{\(t=1,\ldots,T\)} 
  \State \(\bm m \sim \pi^r_{(\mu,\bar{\tau})}\)
  \State \(\bs C^\star(\bm m) \gets \arg\min_{\bs C}\ \frac{1}{\|\bm m\|_0}\sum_i m_i\,\ell_\text{KD}(z_i, \bm C, \theta_T)\)
  \State \( \hat{g}(\bm m) \gets \widehat{\mathcal L}(\bs C^\star(\bm m);\,D_{\text{val}}) \)
  \State \(\tau \gets \tau - \eta \, \widehat{\mathcal{L}}\left(\theta_S^{*}(\bs{m})\right) \, \nabla_{\phi}\ln  p(\bs{m} | (\mu^\star, \tau), r)\) 
\EndFor

\vspace{0.4em}
\State \Return \((\mu^\star,\,\tau)\)
\end{algorithmic}
\end{algorithm}

\vspace{0.05in}
\noindent\textbf{Bilevel Optimization for the Beta Policy.}
For the Beta Policy, directly optimizing $(\alpha,\beta)$ inside the Beta PDF is unstable in practice. 
Although one could attempt to optimize $\alpha$ and $\beta$ separately, this still does not yield a controllable optimization schedule. 
Therefore, we reparameterize $\phi=(\mu,\tau)$ such that
\begin{equation}
\alpha(\phi)=\tau\mu+1,\qquad 
\beta(\phi)=\tau(1-\mu)+1.
\label{eq:beta_reparam}
\end{equation}
Here, $\mu\in[0,1]$ controls the location of the mode over the normalized influence spectrum, while $\tau>0$ controls the sharpness. 
This reparameterization makes it feasible to decouple the optimization in two stages.

After this reparameterization, we first fix $\tau=\bar{\tau}$ and solve the following bilevel problem only w.r.t.~$\mu$:
\begin{gather}
\min_{0\le \mu \le1}\ \Phi(\mu;\bar{\tau})
= \mathbb{E}_{\bs m \sim \pi_{(\mu,\bar{\tau})}^{\,r}}
   \,\widehat{\mathcal{L}}\left(\bs C^*(\bm m)\right),
\label{eq:mu_outer_proxy}\\
\text{s.t.}\quad 
\bs C^*(\bm m)
= \arg\min_{\bs C}\ 
   \frac{1}{\|\bm m\|_0}\sum_i m_i\,\ell_\text{KD}(z_i, \bm C, \theta_T).
\label{eq:mu_inner_proxy}
\end{gather}
To further simplify this optimization, we discretize the search space of $\mu$. 
Specifically, given a step size $t$, we enumerate $\mu$ over the grid ${0,t,2t,\ldots,\lfloor 1/t\rfloor t}$ and select the one that yields the best validation performance. This converts the continuous search over $\mu$ into a simple discrete selection.

Once $\mu$ is determined, we then fix $\mu=\bar{\mu}$ and optimize only $\tau$ via the same bilevel formulation:
\begin{gather}
\min_{\tau>0}\ \Phi(\bar{\mu};\tau)
= \mathbb{E}_{\bs m \sim \pi_{(\bar{\mu},\tau)}^{\,r}}
   \,\widehat{\mathcal{L}}\left(\bs C^*(\bm m)\right),
\label{eq:tau_outer_proxy}\\
\text{s.t.}\quad
\bs C^*(\bm m)
= \arg\min_{\bs C}\
\frac{1}{\|\bm m\|_0}\sum_i m_i,\ell_\text{KD}(z_i, \bm C, \theta_T).
\label{eq:tau_inner_proxy}
\end{gather}

In summary, this two-stage procedure converts the original two-parameter bilevel optimization into two simpler one-parameter subproblems, which is easier to implement in practice while keeping the underlying objective unchanged. We present the detailed algorithmic procedure for this two-stage bilevel optimization in \cref{alg:bilevel}.

\vspace{0.05in}
\noindent\textbf{Computational Complexity}
The end-to-end cost of obtaining the coreset consists of three components: (i) solving the flat-validation-minima (FVM), (ii) computing influence scores, and (iii) the bilevel policy optimization. First, for solving FVM, we simply fine-tune the pretrained model for 1,000 iterations on the validation set with batch size 128. Second, the influence computation requires a single pass over $D_{\text{val}}$ to obtain $\tilde{g}_{\text{val}}$ and a single pass over $D_{\text{tr}}$ to obtain $\tilde{g}_{\text{tr}}$; the inverse-Hessian is approximated via diagonal Fisher, which adds almost no additional cost beyond these passes. Third, the bilevel stage is also lightweight, because the inner problem \(\bs C^\star\) in Eq.~\ref{eq: easy inner obj} is solved purely in frozen feature space. Empirically, this linear model converges in just $1{\sim}2$ epochs, making the overall bilevel phase minor relative to solving FVM and computing IF.
Putting these together, the total cost of obtaining the coreset from a pretrained teacher (starting from scratch) is well under four full epochs of training.

\section{Experimental Details}
\label{app:exp_details}

\subsection{Details on Baseline}

\vspace{0.05in}
\noindent\textbf{EL2N~\cite{el2n}} 
measures the L2 error between the model prediction and the one-hot label at an early epoch (we use epoch 20 by default). Samples with larger EL2N scores are kept, and samples with smaller scores are pruned.

\vspace{0.05in}
\noindent\textbf{GraNd~\cite{el2n}}
computes the gradient norm of the loss w.r.t. model parameters. Examples with the smaller gradient norm (\ie, easier samples) are removed.

\vspace{0.05in}
\noindent\textbf{Forgetting~\cite{toneva2018forgetting}} counts the number of forgetting events during training (\ie, transitions from correct to incorrect prediction). Samples with fewer forgetting events are considered easier and are pruned.

\vspace{0.05in}
\noindent\textbf{DUAL~\cite{DUAL}} measures the Difficulty and Uncertainty-Aware Lightweight score at an early epoch (we use epoch 60 by default). Consistent with the original implementation, we additionally equip DUAL with its pruning-ratio-adaptive sampling.

\vspace{0.05in}
\noindent\textbf{Medium-Difficulty~\cite{iclr25medium}} ranks samples by the teacher’s cross-entropy loss and selects those in the "medium" range window, discarding very easy and very hard examples.
The variant \textit{Medium-Difficulty*} follows the same selection rule but replaces the standard KD loss in Eq.~\ref{eq:kd_total} with the modified distillation loss proposed in Chen et al.~\cite{iclr25medium}.
For brevity, we refer to Medium-Difficulty as Medium in the appendix.

\vspace{0.05in}
\noindent\textbf{CCS~\cite{DBLP:conf/iclr/ccs}} first applies a hard cutoff to remove the hardest samples and then performs stratified sampling over the remaining score ranges. For the hard-cutoff ratio, we follow the original paper. 
To adapt CCS to KD without relying on training-dynamics trajectories, we replace its original score utilized in Zheng et al.~\cite{DBLP:conf/iclr/ccs} with the IF-based score under the FVM condition, yielding the variant \textit{IF-CCS}.

\vspace{0.05in}
\noindent\textbf{BWS~\cite{bws}} searches over multiple score windows to estimate the optimal sampling interval.
Because the original code is not available and the method requires training a feature extractor for 20 epochs, which is unsuitable for KD, we instead use the teacher as the feature extractor and adopt the same inner-loop objective as Eq.~\ref{eq: easy inner obj} to estimate the optimal window for sampling.
Similar to CCS, we replace its original score utilized in Choi et al.~\cite{bws} with the IF-based score under the FVM condition, yielding the variant \textit{IF-BWS}.

\subsection{Main Experimental Settings}

Unless otherwise specified, all experiments follow the same training hyperparameters across teacher, student, and non-distillation baselines. For KD experiments, \emph{teacher models are always trained on the full training set}, while student models are trained on a pruned subset of the data (with different pruning ratios). For non-KD experiments, the model is trained exactly the same as the student, except that only CE loss is used.

\vspace{0.05in}
\noindent\textbf{CIFAR-10/100.}
For CIFAR-10/100, the student model is always a ResNet-18~\cite{DBLP:conf/cvpr/Resnet}, while we consider four teacher architectures: ResNet-18~\cite{DBLP:conf/cvpr/Resnet}, ResNet-50~\cite{DBLP:conf/cvpr/Resnet}, ResNet-101~\cite{DBLP:conf/cvpr/Resnet}, and WideResNet-28-10~\cite{zagoruyko2016wideresnet}.
Unless otherwise stated, all teachers and students are trained with the same optimization hyperparameters: 40,000 iterations (about 200 epochs for the entire dataset) with batch size 256, using SGD with momentum 0.9, weight decay $2\times10^{-4}$, and an initial learning rate of 0.1 with cosine decay to $10^{-4}$.
Standard data augmentations are used, including $4$-pixel padding with random cropping and random horizontal flipping.
For KD experiments, student models are trained using the loss in Eq.~\ref{eq:kd_total} with $\alpha=0.5$.
Because data augmentation changes every mini-batch, we recompute the teacher logits at every iteration to ensure consistency between teacher predictions and student inputs.
Additionally, we investigate a wide range of pruning ratios, including \{10\%, 30\%, 50\%, 70\%, 80\%, 90\%\}, in all KD scenarios.
For non-KD experiments, the student follows the exact same training schedule but replaces the KD loss with a standard CE loss.
All CIFAR experiments are conducted with random seeds \{0, 1, 2\}.

\vspace{0.05in}
\noindent\textbf{ImageNet.}
For ImageNet, the student model is always ResNet-50, while we consider four teacher architectures: ResNet-50~\cite{DBLP:conf/cvpr/Resnet}, ResNet-101~\cite{DBLP:conf/cvpr/Resnet}, WideResNet-50-2~\cite{zagoruyko2016wideresnet}, and ViT-Base~\cite{DBLP:conf/iclr/ViT}. 
The ResNet-50 teacher is trained from scratch following standard practice, whereas the ResNet-101, WideResNet-50-2, and ViT-Base teachers are initialized from the PyTorch model zoo~\cite{pytorch}.
Concretely, the ResNet-50 teacher is trained for 300,000 iterations (about 60 epochs for the entire dataset) with batch size 256, using SGD with momentum 0.9, weight decay $10^{-4}$, and an initial learning rate of 0.1 with cosine decay to $10^{-4}$.
For data augmentation, we apply the standard ImageNet pipeline consisting of random cropping to $224 \times 224$ and random horizontal flipping.
For KD experiments, student models are trained using the loss in Eq.~\ref{eq:kd_total} with $\alpha=0.5$.
Due to computational constraints, we compute the teacher logits only once at the beginning of training and reuse them throughout the entire KD process.
Student training also uses 300,000 iterations, but with a batch size of 128 (about 30 epochs for the entire dataset).
Additionally, we investigate pruning ratios of \{30\%, 50\%, 70\%, 80\%, 90\%\} in all ImageNet KD experiments.
All ImageNet experiments are conducted with random seeds \{0, 1, 2\}.

\vspace{0.05in}
\noindent\textbf{Hyperparameters for IF-Beta.}
We next summarize the hyperparameters specific to IF-Beta itself, including solving the FVM condition and the bilevel optimization procedure. 
For solving the FVM condition, we finetune the pretrained teacher using F-SAM~\cite{cvpr24fsam} on the validation set for 1,000 iterations (about 2.5 epochs on CIFAR-10/100 training set and about 1 epoch on ImageNet training set) with batch size 128, using SGD with momentum 0.9, weight decay $5\times 10^{-4}$, and an initial learning rate of 0.01 with cosine decay to 0.
During the bilevel optimization, the outer loop runs for 20 iterations with an initial learning rate of 0.1 with cosine decay to 0.01, and the inner loop trains the linear classifier for 1 epoch using the KD-aligned objective in Eq.~\ref{eq: easy inner obj} with a learning rate of $10^{-3}$.

\section{More Results on Experiments}
\label{app:exp}

In this section, we evaluate our proposed IF-Beta through a wide range of analysis. 

\subsection{KD Data Pruning Results}

Here, we provide comprehensive experimental results for knowledge distillation under various data pruning settings. We first evaluate IF-Beta in the homogeneous teacher–student setting, where the teacher and student share the same architecture, which serves as the primary evaluation protocol in prior KD data pruning studies~\cite{baruch2025KDinPruning, iclr25medium}. We then extend our analysis to heterogeneous teachers, considering a broad range of teacher architectures~\cite{DBLP:conf/cvpr/Resnet, zagoruyko2016wideresnet, DBLP:conf/iclr/ViT} for each dataset.

\vspace{0.05in}
\noindent\textbf{Homogeneous Teacher-Student Setting}
As shown in Fig.~\ref{fig: main_kd_full_data}, IF-Beta consistently outperforms all baselines across CIFAR-10, CIFAR-100, and ImageNet in the homogeneous teacher–student setting.
More importantly, IF-Beta is able to exceed the accuracy of full–data distillation while using only a subset of the training set, indicating that a substantial portion of the data is indeed redundant for KD.
Concretely, on CIFAR-10, IF-Beta achieves 95.69\% accuracy using only 70\% of the data—surpassing the full-data KD student (95.50\%).
On CIFAR-100, IF-Beta reaches 79.53\% accuracy even at the 90\% pruning ratio, again slightly surpassing the full-data result (79.38\%).
On ImageNet, under a limited compute budget (300K iterations, approximately 30 epochs for the entire dataset), IF-Beta matches or exceeds full-data distillation: using 50\% of the data already yields 74.02\%, higher than the full-data accuracy (73.54\%), and even with 30\% data, the performance (73.48\%) remains nearly identical.
These results demonstrate that IF-Beta not only provides better pruning decisions than existing baselines, but also enables substantial data reduction without sacrificing—and sometimes even improving—distillation performance.
The detailed numerical results corresponding to Fig.~\ref{fig: main_kd_full_data} are provided in Tab.~\ref{table:cifar10_res18_res18}–\ref{table:imagenet_res50_res50}.

\begin{figure*}[htbp]
\centering

\includegraphics[width=0.88\linewidth]{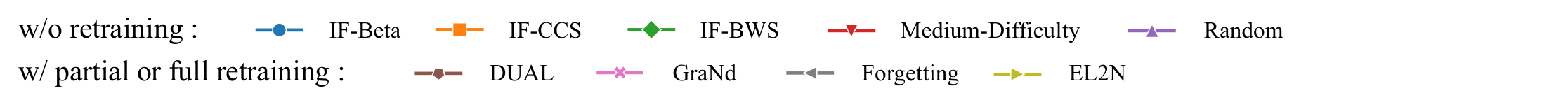}
\\
\vspace{-0.05in}
\raisebox{5pt}{
\begin{minipage}{0.03\linewidth}
  \centering
  \includegraphics[width=\linewidth]{images/main_kd_2.pdf}
\end{minipage}
}
\hspace{-0.1in}
\begin{minipage}{0.28\linewidth}
  \centering
  \subcaptionbox{CIFAR-10}{%
    \includegraphics[width=\linewidth]{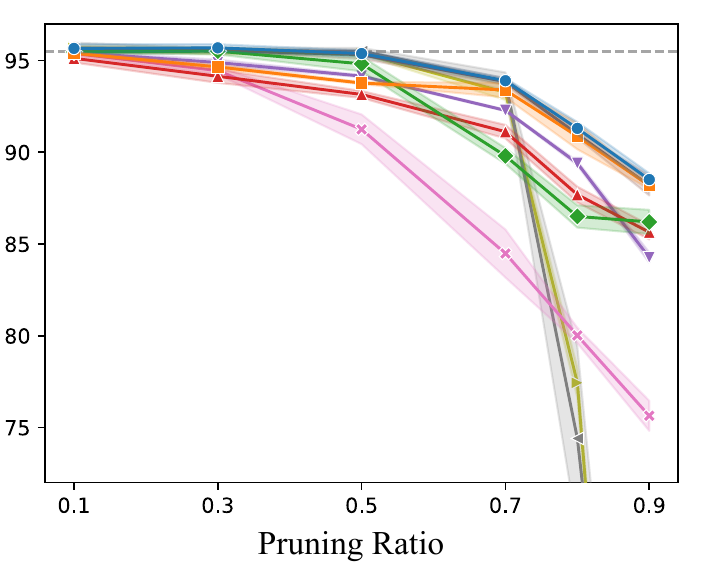}}
\end{minipage}
\hspace{0.05in}
\begin{minipage}{0.28\linewidth}
  \centering
  \subcaptionbox{CIFAR-100}{%
    \includegraphics[width=\linewidth]{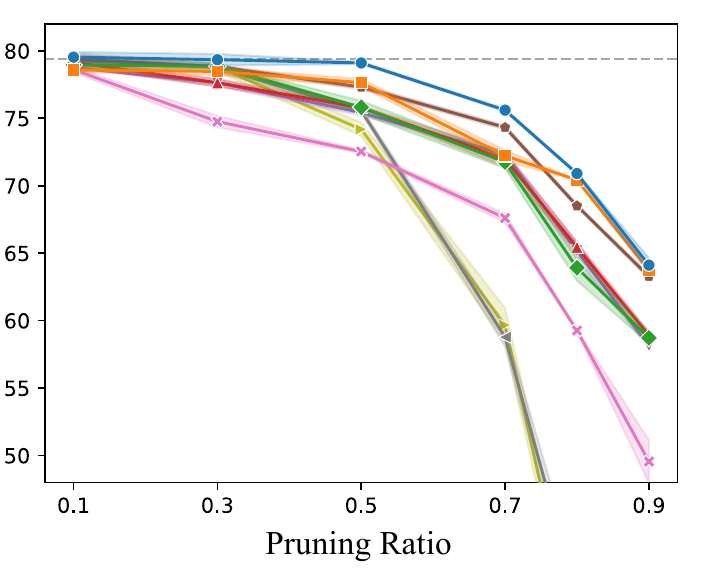}}
\end{minipage}
\hspace{0.05in}
\begin{minipage}{0.28\linewidth}
  \centering
  \subcaptionbox{ImageNet}{%
    \includegraphics[width=\linewidth]{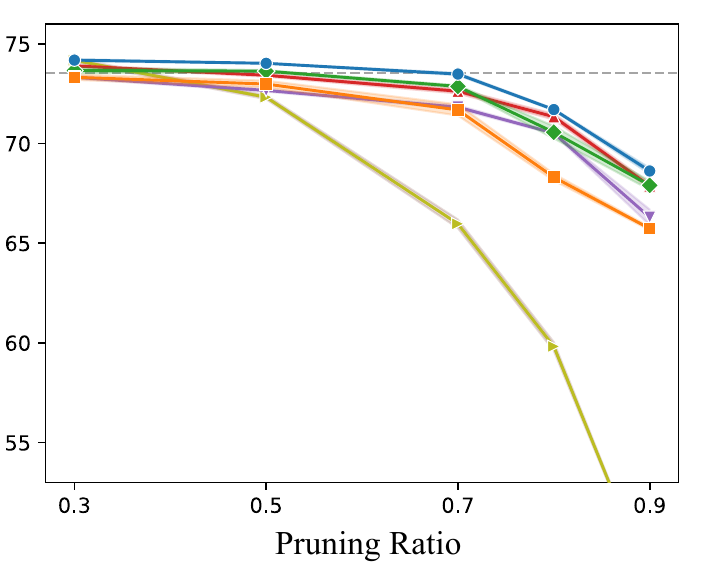}}
\end{minipage}
\vspace{-0.1in}
\caption{Performance comparison between IF-Beta and other baselines on CIFAR-10/100 and ImageNet under the KD setting, where for CIFAR-10/100 both teacher and student are ResNet-18, and for ImageNet both are ResNet-50. The dashed horizontal line denotes the student distilled on the full dataset (without pruning). Detailed numerical results are provided in Tab.~\ref{table:cifar10_res18_res18}-\ref{table:imagenet_res50_res50}.}
\vspace{-0.1in}
\label{fig: main_kd_full_data}
\end{figure*}

\begin{table*}[htbp]
\caption{Test accuracy (\%) with various data pruning ratios on CIFAR-10 under KD setting, where both teacher and student are ResNet-18. Results (mean $\pm$ std) are reported over 3 random runs. \textbf{Bold} denotes the best results and \underline{underline} denotes the second-best.}
\centering
\resizebox{\columnwidth}{!}{
\setlength{\tabcolsep}{2pt}
\scriptsize
\begin{tabular}{l |c c c c c c c}
    \toprule
    \textbf{Method} & \textbf{90\%} & \textbf{80\%} & \textbf{70\%} & \textbf{50\%} & \textbf{30\%} & \textbf{10\%} & \textbf{0\%} \\
    
    \midrule
    &\multicolumn{6}{c}{\textbf{w/ partial or full retraining}} \\
    
    \textbf{EL2N}~\cite{el2n} & 29.08 {\tiny$\pm$0.66} & 77.43 {\tiny$\pm$0.82} & 93.27 {\tiny$\pm$0.42} & \underline{95.38} {\tiny$\pm$0.08} & 95.53 {\tiny$\pm$0.19} & \textbf{95.69} {\tiny$\pm$0.13} & \multirow{11}{*}{95.50{\tiny$\pm$0.08}} \\
    \textbf{Forgetting}~\cite{toneva2018forgetting} & 39.43 {\tiny$\pm$0.30} & 74.40 {\tiny$\pm$4.99} & \underline{93.86} {\tiny$\pm$0.49} & \textbf{95.51} {\tiny$\pm$0.18} & 95.54 {\tiny$\pm$0.16} & \underline{95.66} {\tiny$\pm$0.31} \\
    \textbf{GraNd}~\cite{el2n} & 75.63 {\tiny$\pm$0.82} & 80.01 {\tiny$\pm$0.42} & 84.48 {\tiny$\pm$1.31} & 91.25 {\tiny$\pm$0.81} & 94.45 {\tiny$\pm$0.21} & 95.39 {\tiny$\pm$0.16} \\
    
    \textbf{DUAL}~\cite{DUAL} & \underline{88.19} {\tiny$\pm$0.55} & \underline{90.98} {\tiny$\pm$0.20} & 93.85 {\tiny$\pm$0.19} & 95.32 {\tiny$\pm$0.23} & \underline{95.54} {\tiny$\pm$0.08} & 95.58 {\tiny$\pm$0.17} \\
    
    \cmidrule(lr){1-7}
    &\multicolumn{6}{c}{\textbf{w/o retraining}} \\
    
    \textbf{Random}~\cite{baruch2025KDinPruning} & 84.26 {\tiny$\pm$0.28} & 89.40 {\tiny$\pm$0.11} & 92.28 {\tiny$\pm$0.04} & 94.15 {\tiny$\pm$0.06} & 94.89 {\tiny$\pm$0.16} & 95.44 {\tiny$\pm$0.06} \\
    \textbf{Medium}~\cite{iclr25medium} & 85.64 {\tiny$\pm$0.39} & 87.69 {\tiny$\pm$0.43} & 91.12 {\tiny$\pm$0.37} & 93.15 {\tiny$\pm$0.19} & 94.14 {\tiny$\pm$0.36} & 95.12 {\tiny$\pm$0.23} \\
    \textbf{Medium*}~\cite{iclr25medium} & 85.39 {\tiny$\pm$1.12} & 87.92 {\tiny$\pm$0.06} & 91.05 {\tiny$\pm$0.23} & 92.99 {\tiny$\pm$0.31} & 94.02 {\tiny$\pm$0.30} & 94.97 {\tiny$\pm$0.10} \\
    \textbf{IF-BWS}~\cite{bws} & 86.20 {\tiny$\pm$0.66} & 86.50 {\tiny$\pm$0.61} & 89.80 {\tiny$\pm$0.43} & 94.82 {\tiny$\pm$0.41} & 95.51 {\tiny$\pm$0.18} & 95.47 {\tiny$\pm$0.07} \\
    \textbf{IF-CCS}~\cite{DBLP:conf/iclr/ccs} & 88.18 {\tiny$\pm$0.18} & 90.87 {\tiny$\pm$0.71} & 93.39 {\tiny$\pm$0.40} & 93.77 {\tiny$\pm$0.06} & 94.65 {\tiny$\pm$0.12} & 95.40 {\tiny$\pm$0.07} \\
    \textbf{IF-Beta} (Ours) & \textbf{88.51} {\tiny$\pm$0.34} & \textbf{91.30} {\tiny$\pm$0.35} & \textbf{93.90} {\tiny$\pm$0.14} & \underline{95.38} {\tiny$\pm$0.17} & \textbf{95.69} {\tiny$\pm$0.20} & \underline{95.66} {\tiny$\pm$0.24} \\

    \bottomrule
\end{tabular}
}
\label{table:cifar10_res18_res18}
\end{table*}

\begin{table*}[htbp]
\caption{Test accuracy (\%) with various data pruning ratios on CIFAR-100 under KD setting, where both teacher and student are ResNet-18. Results (mean $\pm$ std) are reported over 3 random runs. \textbf{Bold} denotes the best results and \underline{underline} denotes the second-best.}
\centering
\resizebox{\columnwidth}{!}{
\setlength{\tabcolsep}{2pt}
\scriptsize
\begin{tabular}{l |c c c c c c c}
    \toprule
    \textbf{Method} & \textbf{90\%} & \textbf{80\%} & \textbf{70\%} & \textbf{50\%} & \textbf{30\%} & \textbf{10\%} & \textbf{0\%} \\
    
    \midrule
    &\multicolumn{6}{c}{\textbf{w/ partial or full retraining}} \\
    \textbf{EL2N}~\cite{el2n} & 19.24 {\tiny$\pm$0.67} & 35.64 {\tiny$\pm$1.53} & 59.67 {\tiny$\pm$1.26} & 74.18 {\tiny$\pm$0.50} & 78.80 {\tiny$\pm$0.33} & \underline{79.41} {\tiny$\pm$0.12} & \multirow{11}{*}{79.38{\tiny$\pm$0.12}}  \\
    \textbf{Forgetting}~\cite{toneva2018forgetting} & 29.34 {\tiny$\pm$0.62} & 39.45 {\tiny$\pm$1.82} & 58.77 {\tiny$\pm$0.70} & 75.57 {\tiny$\pm$0.20} & 78.76 {\tiny$\pm$0.33} & 79.32 {\tiny$\pm$0.48} \\
    \textbf{GraNd}~\cite{el2n} & 49.55 {\tiny$\pm$1.60} & 59.26 {\tiny$\pm$0.09} & 67.60 {\tiny$\pm$0.33} & 72.53 {\tiny$\pm$0.12} & 74.75 {\tiny$\pm$0.43} & 78.60 {\tiny$\pm$0.20} \\
    \textbf{DUAL}~\cite{DUAL} & 63.27 {\tiny$\pm$0.14} & 68.51 {\tiny$\pm$0.11} & \underline{74.32} {\tiny$\pm$0.15} & 77.31 {\tiny$\pm$0.17} & 78.83 {\tiny$\pm$0.13} & 79.39 {\tiny$\pm$0.28} \\
    
    \cmidrule(lr){1-7}
    &\multicolumn{6}{c}{\textbf{w/o retraining}} \\
    
    \textbf{Random}~\cite{baruch2025KDinPruning} & 58.13 {\tiny$\pm$0.14} & 65.17 {\tiny$\pm$0.68} & 72.31 {\tiny$\pm$0.14} & 75.42 {\tiny$\pm$0.21} & 77.63 {\tiny$\pm$0.25} & 78.76 {\tiny$\pm$0.02} \\
    \textbf{Medium}~\cite{iclr25medium} & 58.91 {\tiny$\pm$0.27} & 65.45 {\tiny$\pm$0.33} & 71.97 {\tiny$\pm$0.57} & 75.80 {\tiny$\pm$0.10} & 77.63 {\tiny$\pm$0.29} & 79.14 {\tiny$\pm$0.24} \\
    \textbf{Medium*}~\cite{iclr25medium} & 58.59 {\tiny$\pm$0.35} & 65.20 {\tiny$\pm$0.48} & 71.91 {\tiny$\pm$0.94} & 75.76 {\tiny$\pm$0.23} & 77.79 {\tiny$\pm$0.09} & 79.03 {\tiny$\pm$0.35} \\
    \textbf{IF-BWS}~\cite{bws} & 58.71 {\tiny$\pm$0.42} & 63.91 {\tiny$\pm$0.95} & 71.75 {\tiny$\pm$0.46} & 75.81 {\tiny$\pm$0.48} & \underline{78.85} {\tiny$\pm$0.15} & 78.95 {\tiny$\pm$0.27} \\
    \textbf{IF-CCS}~\cite{DBLP:conf/iclr/ccs} & \underline{63.73} {\tiny$\pm$0.14} & \underline{70.42} {\tiny$\pm$0.04} & 72.22 {\tiny$\pm$0.41} & \underline{77.64} {\tiny$\pm$0.28} & 78.47 {\tiny$\pm$0.28} & 78.58 {\tiny$\pm$0.16} \\
    \textbf{IF-Beta} (Ours) & \textbf{64.11} {\tiny$\pm$0.53} & \textbf{70.90} {\tiny$\pm$0.08} & \textbf{75.60} {\tiny$\pm$0.09} & \textbf{79.10} {\tiny$\pm$0.09} & \textbf{79.34} {\tiny$\pm$0.43} & \textbf{79.53} {\tiny$\pm$0.38} \\

    \bottomrule
\end{tabular}
}
\label{table:cifar100_res18_res18}
\end{table*}

\begin{table*}[htbp]
\caption{Test accuracy (\%) with various data pruning ratios on ImageNet under KD setting, where both teacher and student are ResNet-50. Results (mean $\pm$ std) are reported over 3 random runs. \textbf{Bold} denotes the best results and \underline{underline} denotes the second-best.}
\centering
{
\setlength{\tabcolsep}{2pt}
\scriptsize
\begin{tabular}{l |c c c c c c}
    \toprule
    \textbf{Method} & \textbf{90\%} & \textbf{80\%} & \textbf{70\%} & \textbf{50\%} & \textbf{30\%} & \textbf{0\%} \\
    \midrule
    &\multicolumn{5}{c}{\textbf{w/ partial or full retraining}} \\
    \textbf{EL2N}~\cite{el2n} & 47.99 {\tiny$\pm$0.07} & 59.82 {\tiny$\pm$0.22} & 65.97 {\tiny$\pm$0.22} & 72.32 {\tiny$\pm$0.10} & \underline{74.17} {\tiny$\pm$0.09} & \multirow{7}{*}{73.54{\tiny$\pm$0.12}} \\
    \cmidrule(lr){1-6}
    &\multicolumn{5}{c}{\textbf{w/o retraining}} \\
    \textbf{Random}~\cite{baruch2025KDinPruning} & 66.30 {\tiny$\pm$0.39} & 70.52 {\tiny$\pm$0.05} & 71.82 {\tiny$\pm$0.09} & 72.66 {\tiny$\pm$0.06} & 73.32 {\tiny$\pm$0.07} & \\
    \textbf{Medium}~\cite{iclr25medium} & 67.87 {\tiny$\pm$0.13} & \underline{71.34} {\tiny$\pm$0.14} & 72.62 {\tiny$\pm$0.11} & 73.42 {\tiny$\pm$0.07} & 73.89 {\tiny$\pm$0.06} & \\
    \textbf{Medium*}~\cite{iclr25medium} & 67.84 {\tiny$\pm$0.14} & 71.06 {\tiny$\pm$0.12} & 72.79 {\tiny$\pm$0.17} & 73.48 {\tiny$\pm$0.08} & 74.01 {\tiny$\pm$0.07} & \\
    \textbf{IF-BWS}~\cite{bws} & \underline{67.90} {\tiny$\pm$0.22} & 70.56 {\tiny$\pm$0.31} & \underline{72.86} {\tiny$\pm$0.06} & \underline{73.63} {\tiny$\pm$0.05} & 73.66 {\tiny$\pm$0.15} & \\
    \textbf{IF-CCS}~\cite{DBLP:conf/iclr/ccs} & 65.73 {\tiny$\pm$0.08} & 68.31 {\tiny$\pm$0.17} & 71.68 {\tiny$\pm$0.28} & 72.98 {\tiny$\pm$0.17} & 73.32 {\tiny$\pm$0.13} & \\
    \textbf{IF-Beta} (Ours) & \textbf{68.62} {\tiny$\pm$0.15} & \textbf{71.70} {\tiny$\pm$0.04} & \textbf{73.48} {\tiny$\pm$0.06} & \textbf{74.02} {\tiny$\pm$0.01} & \textbf{74.18} {\tiny$\pm$0.05} & \\
    \bottomrule
\end{tabular}
}
\label{table:imagenet_res50_res50}
\end{table*}

\vspace{0.05in}
\noindent\textbf{Heterogeneous Teachers}
As shown in Figs.~\ref{fig: cifar10_teachers}–\ref{fig: imagenet_teachers}, we further evaluate IF-Beta under heterogeneous teacher architectures.
Since Random, IF-CCS, and IF-BWS are the most competitive non-retraining baselines in the homogeneous setting, we focus on comparing against these three methods here.
Across all datasets and teacher architectures, including ResNet-50/101~\cite{DBLP:conf/cvpr/Resnet}, WideResNet-28-10/50-2~\cite{zagoruyko2016wideresnet}, and ViT-Base~\cite{DBLP:conf/iclr/ViT}, IF-Beta consistently outperforms all baselines at every pruning ratio.
Notably, IF-Beta remains stable even when the teacher becomes substantially stronger than the student, demonstrating that our Beta-parameterized sampling policy adapts well to different levels of teacher capacity.
The corresponding numerical results are summarized in Tables~\ref{table: cifar10_dif_teacher}–\ref{table: imagenet_dif_teacher}.
\footnote{Due to limited computational resources, all experiments in this subsection are conducted with a single random seed (seed = 0), though we observe that the overall trend is highly consistent with the multi-seed results reported in the homogeneous setting.}

\begin{figure}[htbp]
    \centering
    \includegraphics[width=\columnwidth]{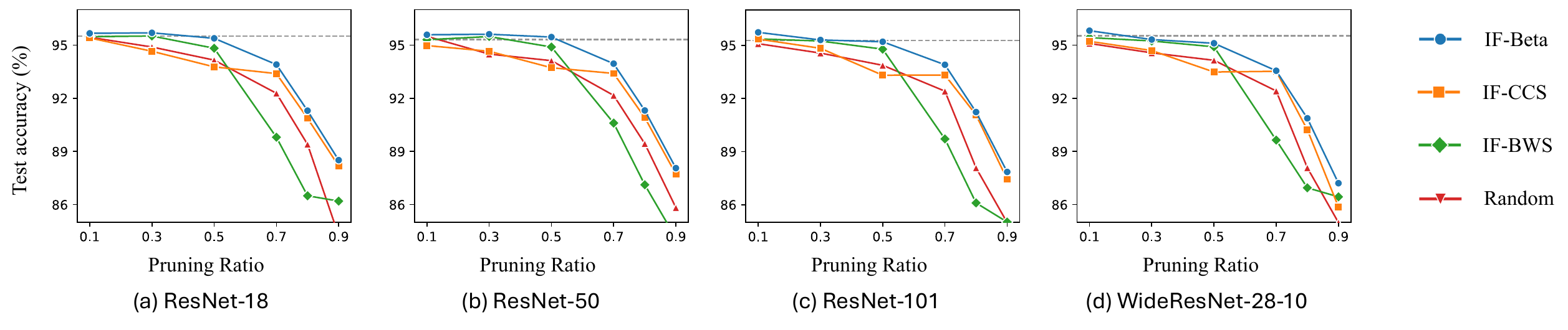}
    \caption{
        Performance comparison between IF-Beta and other data-pruning baselines on CIFAR-10 under the KD setting.
        Across all experiments, the student network is fixed as ResNet-18, while the teacher varies among ResNet-50, ResNet-101, and WideResNet-28-10.
        The dashed horizontal line denotes the student distilled on the full dataset (without pruning).
        Detailed numerical results are provided in \cref{table:cifar10_res18_res18} and \cref{table: cifar10_dif_teacher}.
    }
    \label{fig: cifar10_teachers}
\end{figure}

\begin{figure}[htbp]
    \centering
    \includegraphics[width=\columnwidth]{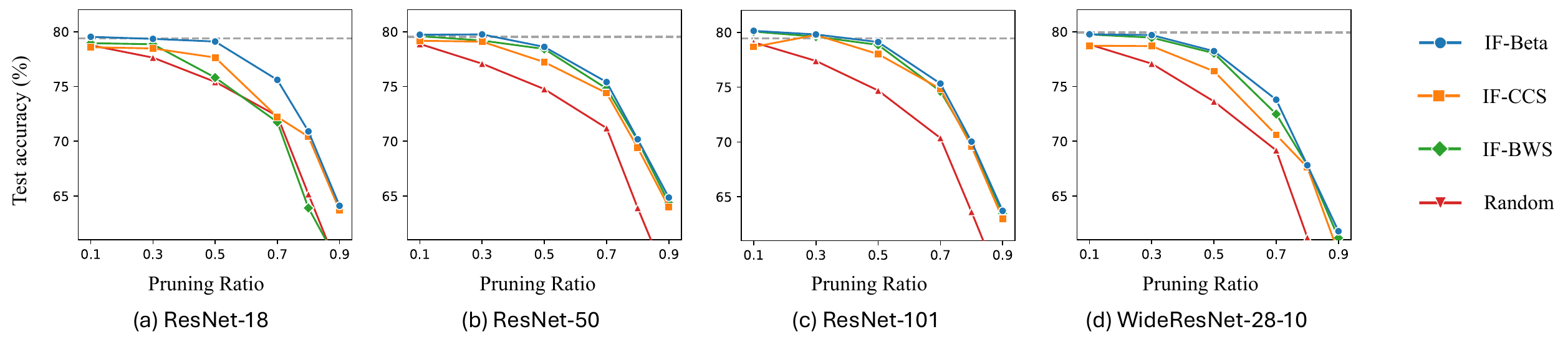}
    \caption{
        Performance comparison between IF-Beta and other data-pruning baselines on CIFAR-100 under the KD setting.
        Across all experiments, the student network is fixed as ResNet-18, while the teacher varies among ResNet-50, ResNet-101, and WideResNet-28-10.
        The dashed horizontal line denotes the student distilled on the full dataset (without pruning).
        Detailed numerical results are provided in \cref{table:cifar100_res18_res18} and \cref{table: cifar100_dif_teacher}.
    }
    \label{fig: cifar100_teachers}
\end{figure}

\begin{figure}[htbp]
    \centering
    \includegraphics[width=\columnwidth]{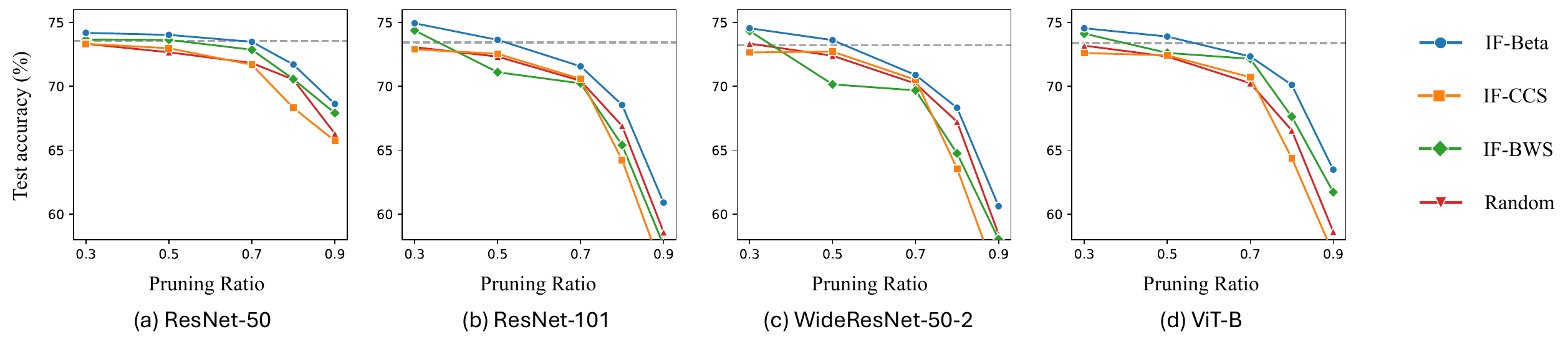}
    \caption{
        Performance comparison between IF-Beta and other data-pruning baselines on ImageNet under the KD setting.
        Across all experiments, the student network is fixed as ResNet-50, while the teacher varies among ResNet-101, WideResNet-50-2, and ViT-Base.
        The dashed horizontal line denotes the student distilled on the full dataset (without pruning).
        Detailed numerical results are provided in \cref{table:imagenet_res50_res50} and \cref{table: imagenet_dif_teacher}.
    }
    \label{fig: imagenet_teachers}
\end{figure}

\begin{table*}[htbp]
\caption{Test accuracy (\%) on CIFAR-10 under the KD setting with various data pruning ratios and different teacher models.
The student network is consistently ResNet-18.
\textbf{Bold} denotes the best results and \underline{underline} denotes the second-best.}
\centering
\setlength{\tabcolsep}{2pt}
\scriptsize
\begin{tabular}{c l |c c c c c c c}
    \toprule
    \textbf{Teacher} & \textbf{Method} & \textbf{90\%} & \textbf{80\%} & \textbf{70\%} & \textbf{50\%} & \textbf{30\%} & \textbf{10\%} & \textbf{0\%} \\
    \midrule
    
    \multirow{4}{*}{ResNet-50} 
    & \textbf{Random}~\cite{baruch2025KDinPruning} & 85.83 & 89.44 & 92.16 & 94.12 & 94.49 & \underline{95.49}  & \multirow{4}{*}{95.31} \\ 
    & \textbf{IF-CCS}~\cite{DBLP:conf/iclr/ccs} & \underline{87.72} & \underline{90.91} & \underline{93.40} & 93.72 & 94.65 & 94.97 & \\
    & \textbf{IF-BWS}~\cite{bws} & 84.11 & 87.12 & 90.60 & \underline{94.89} & \underline{95.48} & 95.30 & \\
    & \textbf{IF-Beta} (Ours) & \textbf{88.06} & \textbf{91.31} & \textbf{93.95} & \textbf{95.45}& \textbf{95.61} & \textbf{95.58} &  \\
    
    \midrule
    
    \multirow{4}{*}{ResNet-101}
    & \textbf{Random}~\cite{baruch2025KDinPruning} & 84.99 & 88.08 & 92.41 & 93.87 & 94.56 & 95.09 & \multirow{4}{*}{95.27} \\
    & \textbf{IF-CCS}~\cite{baruch2025KDinPruning} & \underline{87.43} & \underline{91.07} & \underline{93.32} & 93.31 & 94.83 & 95.35 & \\
    & \textbf{IF-BWS}~\cite{bws} & 85.02 & 86.10 & 89.71 & \underline{94.78} & \underline{95.24} & \underline{95.36} & \\
    & \textbf{IF-Beta} (Ours) & \textbf{87.85} & \textbf{91.23} & \textbf{93.90} & \textbf{95.20} & \textbf{95.30} & \textbf{95.73} & \\
    
    \midrule
    
    \multirow{4}{*}{WideResNet-28-10}
    & \textbf{Random}~\cite{baruch2025KDinPruning} & 84.99 & \underline{88.08} & 92.41 & 94.14 & 94.56 & 95.09  & \multirow{4}{*}{95.52} \\ 
    & \textbf{IF-CCS}~\cite{baruch2025KDinPruning} & 85.87 & 90.22 & \underline{93.52} & 93.48 & 94.69 & 95.21 & \\
    & \textbf{IF-BWS}~\cite{bws} & \underline{86.44} & 86.95 & 89.64 & \underline{94.90} & \underline{95.23} & \underline{95.42} & \\
    & \textbf{IF-Beta} (Ours) & \textbf{87.21} & \textbf{90.87} & \textbf{93.55} & \textbf{95.10} & \textbf{95.31} & \textbf{95.81} & \\
    
    \bottomrule
\end{tabular}
\label{table: cifar10_dif_teacher}
\end{table*}

\begin{table*}[htbp]
\caption{Test accuracy (\%) on CIFAR-100 under the KD setting with various data pruning ratios and different teacher models. 
The student network is consistently ResNet-18.
\textbf{Bold} denotes the best results and \underline{underline} denotes the second-best.}
\centering
\setlength{\tabcolsep}{2pt}
\scriptsize
\begin{tabular}{c l |c c c c c c c}
    \toprule
    \textbf{Teacher} & \textbf{Method} & \textbf{90\%} & \textbf{80\%} & \textbf{70\%} & \textbf{50\%} & \textbf{30\%} & \textbf{10\%} & \textbf{0\%} \\
    \midrule
    
    \multirow{4}{*}{ResNet-50} 
    & \textbf{Random}~\cite{baruch2025KDinPruning} & 57.07 & 63.92 & 71.20 & 74.76 & 77.08 & 78.88  & \multirow{4}{*}{79.53} \\
    & \textbf{IF-CCS}~\cite{baruch2025KDinPruning} & 63.99 & 69.41 & 74.40 & 77.22 & 79.09 & 79.17 & \\
    & \textbf{IF-BWS}~\cite{bws} & \underline{64.36} & \underline{70.07} & \underline{74.81} & \underline{78.42} & \underline{79.19} & \underline{79.59} & \\
    & \textbf{IF-Beta} (Ours) & \textbf{64.86} & \textbf{70.18} & \textbf{75.40} & \textbf{78.61} & \textbf{79.75} & \textbf{79.72} & \\
    
    \midrule
    
    \multirow{4}{*}{ResNet-101}
    & \textbf{Random}~\cite{baruch2025KDinPruning} & 56.66 & 63.63 & 70.36 & 74.69 & 77.39 & 79.08 & \multirow{4}{*}{79.44} \\
    & \textbf{IF-CCS}~\cite{baruch2025KDinPruning} & 63.00 & 69.59 & \underline{74.84} & 78.01 & \underline{79.75} & 78.67 & \\
    & \textbf{IF-BWS}~\cite{bws} & \underline{63.48} & \underline{69.81} & 74.62 & \underline{78.85} & 79.60 & \underline{80.06} & \\
    & \textbf{IF-Beta} (Ours) & \textbf{63.72} & \textbf{70.03} & \textbf{75.32} & \textbf{79.11} & \textbf{79.80} & \textbf{80.13} & \\
    
    \midrule
    
    \multirow{4}{*}{WideResNet-28-10}
    & \textbf{Random}~\cite{baruch2025KDinPruning} & 52.67 & 61.22 & 69.17 & 73.63 & 77.08 & 78.83  & \multirow{4}{*}{79.93} \\
    & \textbf{IF-CCS}~\cite{baruch2025KDinPruning} & 59.81 & \underline{67.62} & 70.58 & 76.39 & 78.68 & 78.72 & \\
    & \textbf{IF-BWS}~\cite{bws} & \underline{61.18} & \textbf{67.81} & \underline{72.48} & \underline{78.03} & \underline{79.46} & \underline{79.74} & \\
    & \textbf{IF-Beta} (Ours) & \textbf{61.79} & \textbf{67.81} & \textbf{73.79} & \textbf{78.22} & \textbf{79.69} & \textbf{79.76} & \\
    
    \bottomrule
\end{tabular}
\label{table: cifar100_dif_teacher}
\end{table*}

\begin{table*}[htbp]
\caption{Test accuracy (\%) on ImageNet under the KD setting with various data pruning ratios and different teacher models. 
The student network is consistently ResNet-50.
\textbf{Bold} denotes the best results and \underline{underline} denotes the second-best.}
\centering
\setlength{\tabcolsep}{2pt}
\scriptsize
\begin{tabular}{c l |c c c c c c c}
    \toprule
    \textbf{Teacher} & \textbf{Method} & \textbf{90\%} & \textbf{80\%} & \textbf{70\%} & \textbf{50\%} & \textbf{30\%} & \textbf{0\%} \\
    \midrule

    \multirow{4}{*}{ResNet-101}
    & \textbf{Random}~\cite{baruch2025KDinPruning} & \underline{58.59} & \underline{66.90} & 70.43 & 72.32 & 73.06 & \multirow{4}{*}{73.43} \\
    & \textbf{IF-CCS}~\cite{baruch2025KDinPruning} & 55.41 & 64.23 & \underline{70.58} & \underline{72.54} & 72.89 & \\
    & \textbf{IF-BWS}~\cite{bws} & 57.56 & 65.40 & 70.23 & 71.10 & \underline{74.37} & \\
    & \textbf{IF-Beta} (Ours) & \textbf{60.90} & \textbf{68.54} & \textbf{71.56} & \textbf{73.63} & \textbf{74.93} & \\
    
    \midrule

    \multirow{4}{*}{WideResNet-50-2} 
    & \textbf{Random}~\cite{baruch2025KDinPruning} & \underline{58.43} & \underline{67.23} & 70.21 & 72.39 & 73.34 & \multirow{4}{*}{73.21} \\
    & \textbf{IF-CCS}~\cite{baruch2025KDinPruning} & 54.78 & 63.55 & \underline{70.51} & \underline{72.71} & 72.65 & \\
    & \textbf{IF-BWS}~\cite{bws} & 58.04 & 64.75 & 69.68 & 70.15 & \underline{74.33} & \\
    & \textbf{IF-Beta} (Ours) & \textbf{60.62} & \textbf{68.32} & \textbf{70.88} & \textbf{73.61} & \textbf{74.54} & \\
    
    \midrule
    
    \multirow{4}{*}{ViT-Base}
    & \textbf{Random}~\cite{baruch2025KDinPruning} & 58.63 & 66.55 & 70.24 & 72.33 & 73.18 & \multirow{4}{*}{73.38} \\
    & \textbf{IF-CCS}~\cite{baruch2025KDinPruning} & 56.91 & 64.38 & 70.71 & 72.42 & 72.60 & \\
    & \textbf{IF-BWS}~\cite{bws} & \underline{61.72} & \underline{67.62} & \underline{72.15} & \underline{72.61} & \underline{74.12} & \\
    & \textbf{IF-Beta} (Ours) & \textbf{63.48} & \textbf{70.11} & \textbf{72.33} & \textbf{73.89} & \textbf{74.54} & \\
    
    \bottomrule
\end{tabular}
\label{table: imagenet_dif_teacher}
\end{table*}

\subsection{Comparison with Other Efficient KD Approaches}
\label{app:comparison_other_kd}

We further compare IF-Beta with recently proposed efficient KD techniques that aim to accelerate training without relying on a full teacher model during optimization. Following the evaluation protocol of Chen et al.~\cite{iclr25medium}, we consider distilling a ResNet-34 teacher~\cite{DBLP:conf/cvpr/Resnet} into a MobileNet student~\cite{mobilenet} on ImageNet with computing on a single NVIDIA A100 GPU.

In their setting, the Baseline corresponds to standard supervised training of MobileNet on the full ImageNet training set for 100 epochs without KD~\cite{mobilenet}. The self-knowledge distillation methods Zipf's LS~\cite{DBLP:conf/eccv/LiangLBZTLF22} and USKD~\cite{DBLP:conf/iccv/YangZLZY023} are also trained for 100 epochs on the full dataset, but with additional self-distillation losses on top of the baseline objective, which slightly increases their computational cost. The test accuracies and training times of these three methods and Medium-Difficulty* are directly cited from Chen et al.~\cite{iclr25medium}.

For a fair comparison, we adopt the same teacher–student pair (ResNet-34 $\to$ MobileNet) and the same 70\% retained subset as Chen et al.~\cite{iclr25medium}. Since the MobileNet student is considerably smaller, we train with batch size of 256, identical to their setup. Consistent with our main ImageNet experiments, we keep the total training budget at 300K iterations (approximately 85 epochs over the 70\% subset) and compute teacher logits only once at the beginning of training, reusing them throughout the KD process. Our student models are trained using the simplest KD loss in Eq.~\ref{eq:kd_total} with $\alpha=0.3$, while all other hyperparameters follow Appendix~\ref{app:exp_details}. 

As summarized in Table~\ref{table:skd_compare}, IF-Beta achieves the best performance of 71.20\%, while simultaneously reducing the overall training time, which requires only 19.24 hours, less than half of the Baseline cost (39.86 hours) and almost half of Medium-Difficulty*’s cost (36.98 hours).
There are several contributing factors:
(i) \textit{reduced data volume} — IF-Beta operates on a subset (pruned 30\% data), directly reducing data volume;
(ii) \textit{fewer effective epochs} — our training schedule corresponds to about 85 epochs, compared to 100 epochs in other baselines;
(iii) \textit{no repeated teacher queries} — unlike Medium-Difficulty*, IF-Beta does not require querying the teacher every epoch, which is particularly costly because the teacher (ResNet-34) is substantially larger than the MobileNet student; and
(iv) \textit{higher-quality subset} — the pruned subset selected by IF-Beta contains more informative samples, which preserves the effectiveness of distillation even under one-shot teacher logits. 
This mitigates the absence of per-iteration teacher guidance and enables IF-Beta to attain higher final accuracy.
Overall, even after including the cost of performing IF-Beta data pruning, our total training remains substantially lower than both baseline and Medium-Difficulty*, while achieving superior test accuracy.
This demonstrates the importance of selecting an informative subset for KD and highlights IF-Beta as a more effective and more efficient alternative to existing efficient KD methods.

\subsection{Standard (Non-KD) Data Pruning Results}

Here, we provide comprehensive experimental results for standard data pruning settings (training without KD), exactly following the prior settings~\cite{DBLP:conf/iclr/ccs, bws}.
As shown in Fig.~\ref{fig:standard_data_pruning_detail}, IF-Beta also demonstrates strong performance under the standard (non-KD) data pruning setting. 
Across both CIFAR-10 and CIFAR-100, IF-Beta consistently surpasses other baselines under almost all pruning ratios. 
Notably, IF-Beta is able to even exceed the accuracy of the full-data model with less data. 
The detailed numerical results corresponding to Fig.~\ref{fig:standard_data_pruning_detail} are provided in Tables~\ref{table:cifar10_res18}–\ref{table:cifar100_res18}.

\begin{figure}[htbp]
\begin{minipage}{0.98\columnwidth}
    \hfill  
    \includegraphics[width=0.88\columnwidth]{images/main_kd_1.pdf}
\end{minipage}
\\
\vspace{-0.05in}
\centering
\raisebox{6pt}{
\begin{minipage}{0.03\columnwidth}
  \centering
  \includegraphics[width=\linewidth]{images/std_data_pruning_2.pdf}
\end{minipage}
}
\hspace{-0.05in}
\begin{minipage}{0.28\columnwidth}
  \centering
  \subcaptionbox{CIFAR-10}{%
    \includegraphics[width=\linewidth]{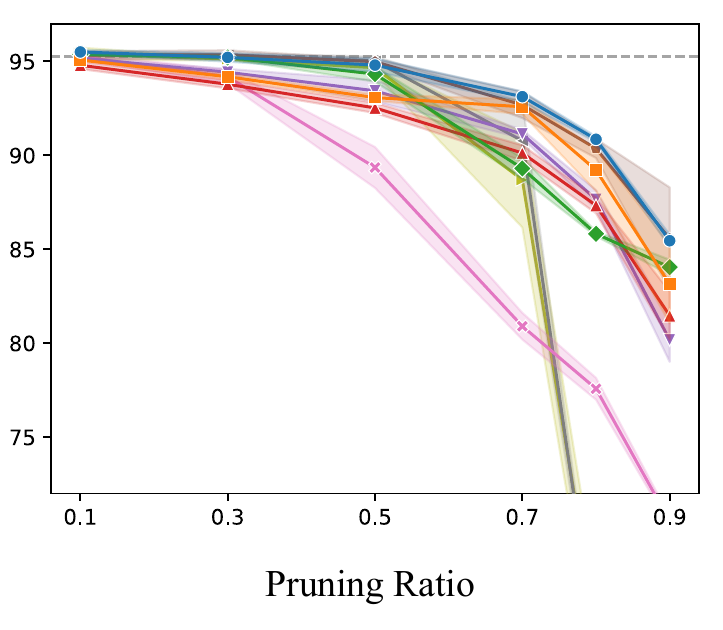}}
\end{minipage}
\hspace{0.2in}
\begin{minipage}{0.28\columnwidth}
  \centering
  \subcaptionbox{CIFAR-100}{%
    \includegraphics[width=\linewidth]{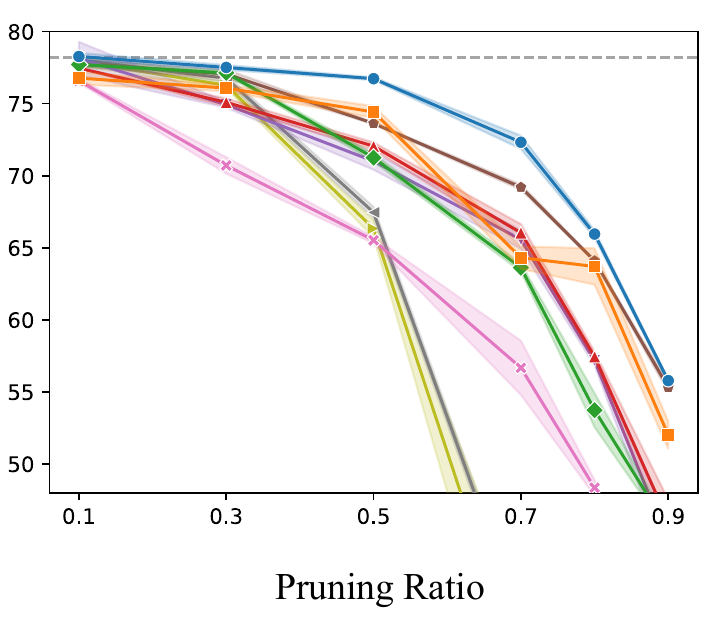}}
\end{minipage}
\caption{Performance comparison between IF-Beta (w/o KD) and other baselines with ResNet-18 on CIFAR-10/100 under standard data pruning setting (\ie, training without KD). The dashed horizontal line denotes the model trained on the full dataset. Detailed numerical results are provided in Tab.~\ref{table:cifar10_res18}-\ref{table:cifar100_res18}.}
\label{fig:standard_data_pruning_detail}
\end{figure}

\begin{table*}[htbp]
\caption{Test accuracy (\%) with ResNet-18 on CIFAR-10 under standard data pruning setting (\ie, training without KD). 
 Results (mean $\pm$ std) are reported over 3 random runs.
\textbf{Bold} denotes the best results and \underline{underline} denotes the second-best.}
\centering
\resizebox{\columnwidth}{!}{
\setlength{\tabcolsep}{2pt}
\scriptsize
\begin{tabular}{l |c c c c c c c}
    \toprule
    \textbf{Method} & \textbf{90\%} & \textbf{80\%} & \textbf{70\%} & \textbf{50\%} & \textbf{30\%} & \textbf{10\%} & \textbf{0\%} \\
    \midrule
    &\multicolumn{6}{c}{\textbf{w/ partial or full retraining}} \\

    \textbf{EL2N}~\cite{el2n} & 22.41 {\tiny$\pm$1.25} & 64.94 {\tiny$\pm$2.19} & 88.69 {\tiny$\pm$2.53} & 94.66 {\tiny$\pm$0.32} & 95.12 {\tiny$\pm$0.07} & \underline{95.40} {\tiny$\pm$0.33} & \multirow{10}{*}{95.26{\tiny$\pm$0.11}} \\
    \textbf{Forgetting}~\cite{toneva2018forgetting} & 35.08 {\tiny$\pm$1.18} & 63.76 {\tiny$\pm$0.42} & 90.77 {\tiny$\pm$2.07} & \textbf{95.02} {\tiny$\pm$0.24} & \underline{95.33} {\tiny$\pm$0.10} & 95.35 {\tiny$\pm$0.09} \\
    \textbf{GraNd}~\cite{el2n} & 70.52 {\tiny$\pm$0.24} & 77.57 {\tiny$\pm$0.58} & 80.90 {\tiny$\pm$0.71} & 89.35 {\tiny$\pm$1.09} & 94.18 {\tiny$\pm$0.36} & 95.25 {\tiny$\pm$0.16} \\
    
    \textbf{DUAL}~\cite{DUAL} & \textbf{85.56} {\tiny$\pm$2.73} & \underline{90.36} {\tiny$\pm$0.51} & \underline{92.68} {\tiny$\pm$0.70} & \underline{94.98} {\tiny$\pm$0.14} & \textbf{95.35} {\tiny$\pm$0.24} & 95.33 {\tiny$\pm$0.21} \\
    
    \cmidrule(lr){1-7}
    &\multicolumn{6}{c}{\textbf{w/o retraining}} \\
    
    \textbf{Random}~\cite{baruch2025KDinPruning} & 80.17 {\tiny$\pm$1.17} & 87.65 {\tiny$\pm$0.50} & 91.13 {\tiny$\pm$0.22} & 93.43 {\tiny$\pm$0.57} & 94.42 {\tiny$\pm$0.18} & 95.17 {\tiny$\pm$0.03} \\
    \textbf{Medium-Difficulty}~\cite{iclr25medium} & 81.46 {\tiny$\pm$1.29} & 87.33 {\tiny$\pm$0.42} & 90.12 {\tiny$\pm$0.41} & 92.52 {\tiny$\pm$0.27} & 93.78 {\tiny$\pm$0.22} & 94.79 {\tiny$\pm$0.20} \\
    
    \textbf{IF-BWS}~\cite{bws} & 84.04 {\tiny$\pm$0.40} & 85.82 {\tiny$\pm$0.13} & 89.30 {\tiny$\pm$0.55} & 94.32 {\tiny$\pm$0.40} & 95.21 {\tiny$\pm$0.09} & 95.37 {\tiny$\pm$0.06} \\
    \textbf{IF-CCS}~\cite{DBLP:conf/iclr/ccs} & 83.14 {\tiny$\pm$2.09} & 89.21 {\tiny$\pm$1.12} & 92.58 {\tiny$\pm$0.39} & 93.07 {\tiny$\pm$0.17} & 94.18 {\tiny$\pm$0.15} & 95.07 {\tiny$\pm$0.02} \\
    \textbf{IF-Beta} (Ours) & \underline{85.45} {\tiny$\pm$0.33} & \textbf{90.85} {\tiny$\pm$0.15} & \textbf{93.12} {\tiny$\pm$0.28} & 94.80 {\tiny$\pm$0.35} & 95.21 {\tiny$\pm$0.23} & \textbf{95.50} {\tiny$\pm$0.11} \\
    \bottomrule
\end{tabular}
}
\label{table:cifar10_res18}
\end{table*}

\begin{table*}[tb]
\caption{Test accuracy (\%) with ResNet-18 on CIFAR-100 under standard data pruning setting (\ie, training without KD). 
 Results (mean $\pm$ std) are reported over 3 random runs.
\textbf{Bold} denotes the best results and \underline{underline} denotes the second-best.}
\centering
\resizebox{\columnwidth}{!}{
\setlength{\tabcolsep}{2pt}
\scriptsize
\begin{tabular}{l |c c c c c c c}
    \toprule
    \textbf{Method} & \textbf{90\%} & \textbf{80\%} & \textbf{70\%} & \textbf{50\%} & \textbf{30\%} & \textbf{10\%} & \textbf{0\%} \\
    
    \midrule
    &\multicolumn{6}{c}{\textbf{w/ partial or full retraining}} \\

    \textbf{EL2N}~\cite{el2n} & 8.14 {\tiny$\pm$0.32} & 15.56 {\tiny$\pm$0.71} & 35.33 {\tiny$\pm$4.94} & 66.33 {\tiny$\pm$0.60} & 76.27 {\tiny$\pm$0.23} & 77.95 {\tiny$\pm$0.35} & \multirow{10}{*}{78.20{\tiny$\pm$0.30}} \\
    \textbf{Forgetting}~\cite{toneva2018forgetting} & 16.14 {\tiny$\pm$0.98} & 24.29 {\tiny$\pm$0.30} & 39.06 {\tiny$\pm$0.57} & 67.45 {\tiny$\pm$0.39} & 76.76 {\tiny$\pm$0.24} & 78.03 {\tiny$\pm$0.31} \\
    \textbf{GraNd}~\cite{el2n} & 34.90 {\tiny$\pm$1.18} & 48.35 {\tiny$\pm$0.54} & 56.67 {\tiny$\pm$1.88} & 65.52 {\tiny$\pm$0.22} & 70.71 {\tiny$\pm$0.57} & 76.59 {\tiny$\pm$0.14} \\
    \textbf{DUAL}~\cite{DUAL} & \underline{55.31} {\tiny$\pm$0.32} & \underline{64.10} {\tiny$\pm$0.07} & \underline{69.20} {\tiny$\pm$0.18} & 73.63 {\tiny$\pm$0.07} & 77.06 {\tiny$\pm$0.31} & 77.81 {\tiny$\pm$0.27} \\
    
    \cmidrule(lr){1-7}
    &\multicolumn{6}{c}{\textbf{w/o retraining}} \\
    
    \textbf{Random}~\cite{baruch2025KDinPruning} & 44.58 {\tiny$\pm$1.53} & 57.16 {\tiny$\pm$0.33} & 65.57 {\tiny$\pm$0.64} & 71.01 {\tiny$\pm$0.60} & 74.93 {\tiny$\pm$0.21} & \underline{78.14} {\tiny$\pm$1.15} \\
    \textbf{Medium-Difficulty}~\cite{iclr25medium} & 46.06 {\tiny$\pm$1.47} & 57.45 {\tiny$\pm$0.35} & 66.04 {\tiny$\pm$0.60} & 72.05 {\tiny$\pm$0.26} & 75.09 {\tiny$\pm$0.27} & 77.46 {\tiny$\pm$0.02} \\
    \textbf{IF-BWS}~\cite{bws} & 45.53 {\tiny$\pm$0.33} & 53.73 {\tiny$\pm$1.18} & 63.62 {\tiny$\pm$0.28} & 71.25 {\tiny$\pm$0.27} & \underline{77.13} {\tiny$\pm$0.14} & 77.71 {\tiny$\pm$0.13} \\
    \textbf{IF-CCS}~\cite{DBLP:conf/iclr/ccs} & 52.01 {\tiny$\pm$0.95} & 63.70 {\tiny$\pm$1.26} & 64.31 {\tiny$\pm$0.78} & \underline{74.42} {\tiny$\pm$0.41} & 76.08 {\tiny$\pm$0.22} & 76.77 {\tiny$\pm$0.45} \\
    \textbf{IF-Beta} (Ours) & \textbf{55.78} {\tiny$\pm$0.09} & \textbf{65.95} {\tiny$\pm$0.25} & \textbf{72.32} {\tiny$\pm$0.46} & \textbf{76.73} {\tiny$\pm$0.10} & \textbf{77.51} {\tiny$\pm$0.17} & \textbf{78.27} {\tiny$\pm$0.25} \\

    \bottomrule
\end{tabular}
}
\label{table:cifar100_res18}
\end{table*}

\section{Further Analysis on Sampling Strategies}
\label{app:analysis_sampling}

In this section, we provide a detailed analysis of two representative heuristic sampling methods, CCS~\cite{DBLP:conf/iclr/ccs} and BWS~\cite{bws}, and examine their behavior under KD settings.

\subsection{Threshold Shift in CCS}

CCS first filters out the most difficult samples using a hard cutoff ratio and then applies stratified sampling to maintain diversity within the remaining pool.
However, determining this cutoff ratio is non-trivial, as it must be empirically tuned after full training, which contradicts the goal of efficiency.
A practical workaround is to reuse the hard cutoff ratio suggested in prior studies~\cite{DBLP:conf/iclr/ccs}.
Nevertheless, as shown in \cref{fig:ccs_wo_kd,fig:ccs_w_kd}, we observe that the optimal cutoff ratio shifts notably between KD and non-KD scenarios, even when using the same dataset.
These observations suggest that the hyperparameters provided in in the original CCS paper~\cite{DBLP:conf/iclr/ccs} cannot be directly transferred across distillation settings, thereby limiting the practicality of CCS in efficient KD.

\subsection{Suboptimal Region in BWS}

BWS employs a sliding-window selection mechanism guided by sample difficulty, identifying the optimal window position by shifting a fixed-size window across the sorted samples.
However, this heuristic fixed-size window design ultimately leads to a suboptimal sampling distribution.
To investigate this, we first locate the optimal BWS window and then progressively replace a portion of the samples within it with randomly selected samples from outside the window.
The performance under different replacement ratios is shown in \cref{fig:window_is_not_good}.
As observed, introducing a moderate level of random replacement yields better results than the original fixed window (\ie, 0\% replacement), indicating that the fixed window design in BWS is indeed suboptimal.
These observations underscore the necessity of a more flexible and learnable sampling strategy.

\section{Related Work}
\label{app: related works}

\subsection{Knowledge Distillation}

Knowledge Distillation (KD) aims to transfer the knowledge of a strong teacher network into a lighter student network, typically to improve the student’s performance without modifying its architecture or training protocol~\cite{DBLP:journals/corr15/KnowledgeDistilling}. 
The transferred “knowledge” itself may take different forms: 
(1) Response-based KD, which uses teacher logits or softened probability distributions~\cite{DBLP:journals/corr15/KnowledgeDistilling}; 
(2) Feature-based KD, which matches intermediate activations from hidden layers~\cite{DBLP:journals/corr/fitnets, DBLP:conf/iclr/attentiontransfer};
(3) Relation-based KD, which aligns relational geometry between samples or layers (\eg, pairwise distances or angles)~\cite{DBLP:conf/cvpr/RKD, DBLP:conf/cvpr/AhnHDLD19}; or
(3) Hierarchy-based KD, which distills knowledge at multiple levels or blocks (\eg, transformer layer-wise compression)~\cite{DBLP:conf/acl/mobilebert, DBLP:conf/emnlp/TinyBert}.
Recent works also design different robust KD objectives to sharpen or smooth the student’s logits, such as temperature annealing, margin-based KL variants~\cite{DBLP:conf/cvpr/SunR00C24, iclr25medium}, feature-to-logit consistency terms, or confidence sharpening losses.
In this work, however, we deliberately adopt the most classical and widely used formulation: logit-based distillation using KL divergence.

A complementary research direction seeks to reduce the computational overhead of KD.
Self-knowledge distillation attempts to replace the teacher with latent knowledge hidden in the student itself~\cite{DBLP:conf/eccv/LiangLBZTLF22, DBLP:conf/iccv/YangZLZY023}.
Although this removes the need for a large teacher (thus saving training cost), its accuracy gains remain limited, due to the intrinsic quality gap between the student and a strong teacher.
Another line of acceleration reduces the number of teacher forward passes~\cite{DBLP:conf/eccv/ShenX22, DBLP:conf/cvpr/BeyerZRMA022}.
A common strategy is to pre-compute and store teacher-generated knowledge.
However, fixed pre-computed knowledge may become stale, since the student sees fresh augmented samples every epoch, whereas cached logits are tied to old un-augmented inputs, leading to degradation as empirically shown in~\cite{DBLP:conf/cvpr/BeyerZRMA022}.

In summary, existing KD research mainly focuses on what to distill (logits vs features) or how to speed up distillation (self-knowledge distillation or caching-based).
Our work takes an orthogonal perspective: we investigate which training samples are most essential for KD.
Specifically, we study how logit-based KD behaves when the student is trained not on the full dataset, but on a carefully selected subset obtained via efficient data pruning.

\subsection{Data Pruning}

Data pruning has been extensively studied since the era of classical machine learning~\cite{DBLP:conf/stoc/Har-PeledM04}, which aims to subsets of training samples that maximize informativeness or diversity and minimize redundancy~\cite{2025coresetSurvey}.
However, classical data pruning methods~\cite{DBLP:conf/icml/CampbellB18, DBLP:journals/talg/Clarkson10} designed for conventional machine learning tasks exhibit significant limitations when applied to deep learning, primarily due to their high computational complexity and fixed data representations.
With the rapid advancement of deep learning, recent research has shifted toward methodologies tailored to the characteristics of deep neural networks. 
This trend has become even more pronounced in the era of large-scale foundation models~\cite{DBLP:conf/icml/24Less,DBLP:conf/aaai/BioCoreset}, where data pruning is increasingly important for reducing training cost and improving data efficiency. 
Existing methods can be broadly categorized into score-based approaches and optimization-based approaches.
While optimization-based formulations~\cite{icml22BilevelCoreset, DBLP:conf/iclr/datasetpruning} can yield theoretically grounded subsets, they typically involve heavy computation and large memory overhead, making them impractical for large-scale settings such as ImageNet or effient KD-based workflows. 
Therefore, the most recent advances emphasize score-based approaches.

Score-based methods assign an importance score to each training example and retain samples with high utility. The majority of these techniques rely on training dynamics, such as per-iteration predictions, losses, or gradients obtained during model training. 
Within this paradigm, two major families have emerged:
(1) Difficulty-based score~\cite{el2n, toneva2018forgetting, pleiss2020aum}, which quantify how difficult each example is to learn. Early approaches select the most difficult samples, but recent studies~\cite{DBLP:conf/iclr/ccs, bws} show that this strategy deteriorates under high pruning ratios, even inferior to random pruning, because extremely hard samples often harm generalization. This motivates methods such as CCS~\cite{DBLP:conf/iclr/ccs} and BWS~\cite{bws}, which incorporate improved sampling strategies to overcome this bottleneck.
(2) Uncertainty-based score~\cite{DynamicUncertainty, DUAL}, which estimate sample informativeness by measuring predictive uncertainty throughout training. Similar to difficulty-based approaches, they also require accessing a substantial portion of the training trajectory, and methods such as DUAL~\cite{DUAL} attempt to alleviate this dependence by leveraging early-epoch signals.

However, despite their effectiveness, these methods remain incompatible with efficient KD, because they fundamentally rely on retraining the model to extract training dynamics. 
In practical KD scenarios, we always only have access to a pretrained teacher~\cite{DBLP:journals/corr15/KnowledgeDistilling}, and obtaining full training trajectories would require retraining the teacher from scratch, which is an expensive and impractical requirement. 
This gap highlights the need for pruning strategies that do not depend on costly training dynamics while still being efficient for KD.
Our work moves toward this goal by introducing influence functions~\cite{koh2017understanding} as a promising post-hoc alternative for estimating sample importance without requiring any teacher retraining.

\subsection{Influence Function}

The concept of influence functions dates back to classical statistics~\cite{cook1980characterizations} and was later introduced into deep neural networks (DNNs) by Koh and Liang~\cite{koh2017understanding}.
However, the method has long been criticized for two major limitations: fragility and computational inefficiency.
However, subsequent studies have identified challenges in applying IF to DNNs.
The fragility issue~\cite{iclr21IFisFragile,nips22IFQA} arises because IFs often fail to align with leave-one-out retraining in nonlinear networks, partly due to local non-convexity and the instability of second-order approximations.
On the other hand, the efficiency issue stems from the need to approximate the inverse Hessian, which is computationally expensive and memory-intensive.
To address this, Koh and Liang~\cite{koh2017understanding} originally proposed the LiSSA~\cite{jmlr17lissa} algorithm for efficient Hessian estimation, yet it remained both slow and inaccurate for large-scale models.
TracIn~\cite{nips20tracIn} sidesteps the Hessian inversion by tracking loss changes across multiple training checkpoints, improving scalability but at the cost of large storage requirements and limited applicability in knowledge distillation (KD) settings.
Other acceleration variants, such as FastIF~\cite{emnlp21fastif}, ArnoldiIF~\cite{aaai22arnoldiIF}, and DataInf~\cite{iclr24datainf}, attempt to approximate the influence more efficiently, yet none fundamentally resolve the performance gap between estimated and true influences.
Recently, Ye et al.~\cite{ye2025robust} bridged this gap by introducing a robust formulation of influence functions that leverages the condition of flat validation minima.
This property allows accurate influence estimation without relying on precise Hessian inversion—using only the diagonal Fisher approximation—thus making IF both stable and computationally efficient.

\section{Proofs}
\label{app:ori_PGE}

Here, we provide the proof of \cref{thm:unbiased-pg}.
\begin{proof}
    By the definition of $\Phi(\phi)$, we have
{
\footnotesize
\begin{align}
    \nabla_{\phi}\Phi(\phi) 
    & = \nabla_{\phi} \mathbb{E}_{\bm m \sim \pi_{\phi, r}} \widehat{\mathcal{L}}\left(\theta^{*}( \bs{m})\right)
    \\
    & = \nabla_{\phi} \sum_{\bm m \sim p(\bm m \mid \phi, r) } \widehat{\mathcal{L}}\left(\theta^{*}( \bs{m})\right) \, p(\bm m | \phi, r) \\
    & = \sum_{\bm m \sim p(\bm m \mid \phi, r) } \widehat{\mathcal{L}}\left(\theta^{*}( \bs{m})\right) \frac{\nabla_{\phi} p(\bs{m}|\phi, r)}{p(\bs{m}|\phi, r)} \, p(\bs{m}|\phi, r)  \\
    & = \sum_{\bm m \sim p(\bm m \mid \phi, r) } \widehat{\mathcal{L}}\left(\theta^{*}(\bs{m})\right) \nabla_{\phi}\ln  p(\bs{m}|\phi, r) \,p(\bs{m}|\phi, r) \\
    & = \mathbb{E}_{p(\bs{m}|\phi, r)} \widehat{\mathcal{L}}\left(\theta^{*}(\bs{m})\right) \nabla_{\phi}\ln  p(\bs{m}|\phi, r).
    \label{eqn:ori-PGE}
\end{align}
}
which completes the proof.
\end{proof}

\section{Derivations}
\label{app:derivation}

In this section, we present the full derivation of Eq.~\ref{eq: IF-FVM}. 
Following the classical influence-function framework of Koh and Liang~\cite{koh2017understanding}, our goal is to approximate the excess loss 
$\ell(z_{\text{val}}, \theta^\star_{z_{\text{tr}}}) - \ell(z_{\text{val}}, \theta^\star)$ 
for a validation example $z_{\text{val}}$, where $\theta^\star_{z_{\text{tr}}}$ denotes the minimizer of the perturbed empirical risk:
\begin{equation}
\theta^\star_{z_\text{tr}} := \arg \min_\theta \frac{1}{N} \sum_{n=1}^N \ell(z_n, \theta) + \epsilon \ell(z_\text{tr}, \theta),
\label{eqn:theta-ztr}
\end{equation}
with $z_n \in D_{\text{tr}}$ and a small upweighting $\epsilon > 0$.
More specifically, the IF can be understood as a two-step approximation: (i) the parameter change and (ii) the loss change.

\vspace{0.05in}
\noindent\textbf{Step 1: Parameter Change.}
Instead of directly working with $\theta^\star_{z_{\text{tr}}}$ as in Koh and Liang~\cite{koh2017understanding}, we consider the perturbed SAM objective and define
\begin{equation}
    \tilde{\theta}_T^{z_\text{tr}}
    := \arg \min_\theta \max_{\|\Delta\| \le \gamma} \hat{R}_{\text{val}}(\theta_T + \Delta) + \epsilon \ell(z_\text{tr}, \theta_T + \Delta).
    \label{eqn: parameter change}
\end{equation}
Here, $\tilde{\theta}_T^{z_\text{tr}}$ denotes the perturbed parameter resulting from incorporating the training sample $z_\text{tr}$ into the optimization objective, \ie, Eq.~\ref{eq: SAM}, with a small perturbation $\epsilon$.
Therefore, we next derive the parameter change $\tilde{\theta}_T^{z_\text{tr}} - \tilde{\theta}_T$ in the context of loss minimization, based on the analysis of $\theta^\star_{z_\text{tr}} - \theta^\star$ presented in Koh and Liang~\cite{koh2017understanding}.

Recall that $\tilde{\theta}_T$ minimizes the following SAM object:
\begin{equation}
    \hat{R}^\gamma_\text{val} (\theta_T)
    := \max_{\|\Delta\| \le \gamma} \hat{R}_{\text{val}}(\theta_T + \Delta) \\
    = \max_{\|\Delta\| \le \gamma} \frac{1}{M} \sum_{m=1}^M \ell(z_m, \theta_T + \Delta).
\end{equation}

We further assume that $R$ is twice-differentiable and strongly convex in $\theta_T$, \ie,
\begin{equation}
    \tilde{H}_\text{val}
    := \nabla^2_{\theta_T} \hat{R}_\text{val} (\tilde{\theta}_T)
    = \frac{1}{M} \sum_{m=1}^M \nabla^2_{\tilde{\theta}_T} \ell(z_m, \tilde{\theta}_T),
\end{equation}
is positive definite and hence invertible.

Since $\tilde{\theta}_T^{z_\text{tr}}$ is the minimizer of Eq.~\ref{eqn: parameter change}, assuming $\gamma \to 0$, let us examine its first-order optimality conditions:
\begin{equation}
    0 = \nabla_{\tilde{\theta}_T} R_\text{val}(\tilde{\theta}_T^{z_\text{tr}}) + \epsilon \nabla_{\tilde{\theta}_T} \ell(z, \tilde{\theta}_T^{z_\text{tr}})
\end{equation}

Considering that $\tilde{\theta}_T^{z_\text{tr}} \to \tilde{\theta}_T$ as $\epsilon \to 0$, we perform a Taylor expansion of the right-hand side:
{
\footnotesize
\begin{equation}
    0
    \approx \left[ \nabla_{\tilde{\theta}_T} R_\text{val}(\tilde{\theta}_T) + \epsilon \nabla_{\tilde{\theta}_T} \ell (z_\text{tr}, \tilde{\theta}_T) \right]
    + \left[ \nabla^2_{\tilde{\theta}_T} R_\text{val}(\tilde{\theta}_T) + \epsilon \nabla^2_{\tilde{\theta}_T} \ell(z_\text{tr}, \tilde{\theta}_T) \right] \left( \tilde{\theta}_T^{z_\text{tr}} - \tilde{\theta}_T \right),
\end{equation}
}
where we have dropped $o(\| \tilde{\theta}_T^{z_\text{tr}} - \tilde{\theta}_T \|)$ terms.

Solving for $\tilde{\theta}_T^{z_\text{tr}} - \tilde{\theta}_T$, we get:
{
\footnotesize
\begin{equation}
    \tilde{\theta}_T^{z_\text{tr}} - \tilde{\theta}_T
    \approx - \left[ \nabla^2_{\tilde{\theta}_T} R_\text{val}(\tilde{\theta}_T) + \epsilon \nabla^2_{\tilde{\theta}_T} \ell(z_\text{tr}, \tilde{\theta}_T) \right]^{-1}
    \left[ \nabla_{\tilde{\theta}_T} R_\text{val}(\tilde{\theta}_T) + \epsilon \nabla_{\tilde{\theta}_T} \ell (z_\text{tr}, \tilde{\theta}_T) \right].
\end{equation}
}

Since $\tilde{\theta}_T$ is the minimizer of $\hat{R}^\gamma_\text{val}(\theta_T)$, we have $\nabla_{\tilde{\theta}_T} R_\text{val}(\tilde{\theta}_T) = 0$.
Dropping $o(\epsilon)$ terms, then
\begin{equation}
    \tilde{\theta}_T^{z_\text{tr}} - \tilde{\theta}_T
    \approx - \epsilon \tilde{H}^{-1}_\text{val} \tilde{g}_{z_\text{tr}},
    \label{eq: parameter change}
\end{equation}
where $\tilde{g}_{z_\text{tr}} = \nabla_{\tilde{\theta}} \ell (z_\text{tr}, \tilde{\theta})$.

\vspace{0.05in}
\noindent\textbf{Step 2: Loss Change.}
Consider the loss change on a validation sample $z_\text{val}$,
and we denote the parameter change $\Delta \theta_T = \tilde{\theta}_T^{z_\text{tr}} - \tilde{\theta}_T$. 
We propose to estimate the loss change with respect to the parameter changes $\Delta \theta_T$ via second-order approximation:
\begin{equation}
    \ell(z_\text{val}, \tilde{\theta}_T^{z_\text{tr}}) - \ell(z_\text{val}, \tilde{\theta}_T)
    \approx
    \tilde{g}_{z_\text{val}}^\top \Delta \theta_T +
    \frac{1}{2} \Delta \theta_T^\top \nabla^2_{\tilde{\theta}_T} \ell (z_{\text{val}}, \tilde{\theta}_T) \Delta \theta_T,
\end{equation}
where $\tilde{g}_{z_\text{val}} = \nabla_{\theta_T} \ell(z_\text{val}, \tilde{\theta}_T)$.

Next, combining with Eq.~\ref{eq: parameter change}, we have
{
\footnotesize
\begin{align}
    \ell(z_\text{val}, \tilde{\theta}_T^{z_\text{tr}}) - \ell(z_\text{val}, \tilde{\theta}_T)
    & \approx
    \tilde{g}_{z_\text{val}}^\top \left( - \epsilon \tilde{H}^{-1}_\text{val} \tilde{g}_{z_\text{tr}} \right) \\
    & \quad + \frac{1}{2} \left( - \epsilon \tilde{H}^{-1}_\text{val} \tilde{g}_{z_\text{tr}} \right)^\top \nabla^2_{\tilde{\theta}_T} \ell (z_{\text{val}}, \tilde{\theta}_T) \left( - \epsilon \tilde{H}^{-1}_\text{val} \tilde{g}_{z_\text{tr}} \right) \\
    & =
    - \epsilon \tilde{g}_{z_\text{val}}^\top \tilde{H}^{-1}_\text{val} \tilde{g}_{z_\text{tr}}+
    \frac{1}{2} \epsilon^2 \tilde{g}_{z_\text{tr}}^\top \tilde{H}^{-1}_\text{val} \nabla^2_{\tilde{\theta}_T} \ell (z_{\text{val}}, \tilde{\theta}_T) \tilde{H}^{-1}_\text{val} \tilde{g}_{z_\text{tr}}.
\end{align}
}

Since we want to measure the loss change with respect to removing the training sample $z_\text{tr}$, we define the Influence Function as follows:
\begin{equation}
    \label{eq: influence on single test sample}
    \mathcal{I}(z_\text{tr}, z_\text{val})
    :=
    \tilde{g}_{z_\text{val}} \tilde{H}^{-1}_\text{val} \tilde{g}_{z_\text{tr}}
    + \frac{1}{2} \epsilon \tilde{g}_{z_\text{tr}}^\top \tilde{H}^{-1}_\text{val} \nabla^2_{\tilde{\theta}_T} \ell (z_{\text{val}}, \tilde{\theta}_T) \tilde{H}^{-1}_\text{val} \tilde{g}_{z_\text{tr}},
\end{equation}
with $\epsilon > 0$.

Then, we consider the influence on the validation set $D_\text{val}$, which can be defined as the sum of the influence on each validation sample:
\begin{equation}
    \mathcal{I}(z_\text{tr}, D_\text{val})
    := \sum_{m=1}^{M} \mathcal{I}(z_\text{tr}, z_m).
\end{equation}

Incorporating with \cref{eq: influence on single test sample}, we get
\begin{equation}
    \begin{aligned}
        \mathcal{I}(z_\text{tr}, D_\text{val})
        & := \sum_{m=1}^M
        \tilde{g}_{z_m} \tilde{H}^{-1}_\text{val} \tilde{g}_{z_\text{tr}}
        + \frac{1}{2} \epsilon \tilde{g}_{z_\text{tr}}^\top \tilde{H}^{-1}_\text{val} \nabla^2_{\tilde{\theta}_T} \ell (z_{\text{val}}, \tilde{\theta}_T) \tilde{H}^{-1}_\text{val} \tilde{g}_{z_\text{tr}} \\
        & = \tilde{g}_\text{val} \tilde{H}^{-1}_\text{val} \tilde{g}_{z_\text{tr}}
        + \frac{1}{2} \epsilon \tilde{g}_{z_\text{tr}}^\top \tilde{H}^{-1}_\text{val} \tilde{g}_{z_\text{tr}}.
    \end{aligned}
\end{equation}

Since $\tilde{g}_\text{val} \to 0$, we define the Influence Function on the validation set $D_\text{val}$ as follows:
\begin{equation}
    \mathcal{I}(z_\text{tr}, D_\text{val})
    := \tilde{g}_{z_\text{tr}}^\top \tilde{H}^{-1}_\text{val} \tilde{g}_{z_\text{tr}},
\end{equation}
where $\frac{1}{2} \epsilon > 0$ is dropped.

\end{document}